\documentclass[journal, onecolumn]{IEEEtran}

\ifCLASSINFOpdf
\else
\fi

\usepackage{amsmath}
\usepackage{orcidlink}
\usepackage{multirow}
\usepackage{booktabs}
\usepackage{tabularx}
\usepackage{graphicx}
\usepackage{caption}
\usepackage{amsthm}
\usepackage{amsfonts}   
\usepackage{amssymb}    
\usepackage{colortbl,xcolor} 
\definecolor{lightred}{RGB}{255,230,230}
\definecolor{lightblue}{RGB}{230,240,255}
\usepackage[caption=false,font=footnotesize]{subfig}

\newtheorem{theorem}{Theorem}

\newtheorem{assumption}{Assumption}
\newtheorem{proposition}{Proposition}

\makeatletter
    \def\@cite#1#2{\textsuperscript{[{#1\if@tempswa , #2\fi}]}}
\makeatother

\begin{document}
\title{Autocorrelation Reintroduces Spectral Bias in KANs for Time Series Forecasting}

\author{
        Chen~Zeng,
        Jiahui~Wang,
    and~Qiao~Wang\orcidlink{0000-0002-5271-0472},~\IEEEmembership{Senior Member,~IEEE}%
 
\thanks{C. Zeng was with the School of Information Science and Engineering, Southeast University, Nanjing, China (email:
chenzeng@seu.edu.cn). }
\thanks{J. Wang was with the School of Economics and Management, Southeast University, Nanjing, China (email:
wangjh0521@seu.edu.cn). }
\thanks {Q. Wang was with both the School of Information Science and Engineering and the School of Economics and Management, Southeast University, Nanjing, China (Corresponding Author, email: qiaowang@seu.edu.cn).}}

\maketitle

\begin{abstract}
Existing theory suggests that Kolmogorov-Arnold Networks (KANs) can overcome the spectral bias commonly observed in neural networks under the assumption that inputs are statistically independent. However, this assumption does not hold in time series forecasting (TSF), where inputs are lagged observations with strong temporal autocorrelation. Through theoretical analysis and empirical validation, we obtain an unexpected finding: temporal autocorrelation reintroduces spectral bias in KANs, and the bias becomes increasingly pronounced as the degree of autocorrelation increases. This suggests that standard KANs may face substantial difficulties in TSF with strongly autocorrelated inputs. To address this problem, we introduce the Discrete Cosine Transform (DCT) to reduce the correlations among the network inputs. As expected, experimental results reveal that DCT preprocessing substantially reduces the observed low-frequency preference in TSF. This result also corroborates that the spectral bias of KANs in TSF tasks is indeed induced by the autocorrelation among input variables.
\end{abstract}

\begin{IEEEkeywords}
Time Series Forecasting,
Spectral Bias,
KAN,
DCT
\end{IEEEkeywords}

\IEEEpeerreviewmaketitle

\section{Introduction}

Time series forecasting (TSF) is a fundamental task across a wide range of domains, including financial markets\cite{financial}, weather prediction\cite{weather}, energy management\cite{energy}, and traffic flow estimation\cite{traffic}. In recent years, deep neural networks have become the dominant paradigm for TSF, with architectures such as multi-layer perceptrons (MLPs)\cite{mlp}, recurrent neural networks (RNNs)\cite{rnn, lstm}, Transformers\cite{transformer1, transformer2}, temporal convolutional networks (TCNs)\cite{tcn}, multi-scale temporal modeling frameworks like TimesNet\cite{timesnet}, and large language model-based approaches such as Time-LLM\cite{time-LLM} achieving strong empirical performance. Among the latest developments, Kolmogorov--Arnold Networks (KANs)\cite{kan} have attracted considerable attention due to their theoretical expressiveness rooted in the Kolmogorov--Arnold representation theorem\cite{kanthe1, kanthe2}, which states that any multivariate continuous function can be decomposed into a finite composition of univariate functions and addition. By replacing the fixed activation functions of conventional MLPs with learnable univariate spline functions on the edges, KANs offer a structurally different and potentially more expressive alternative for function approximation.

A well-documented phenomenon in neural network training is spectral bias\cite{mlp-spectral-bias1, mlp-spectral-bias2} (also referred to as the frequency principle), whereby networks tend to learn low-frequency components of the target function much faster than high-frequency components. This bias has been studied extensively in the context of MLPs and has been shown to degrade performance on tasks that require the faithful reconstruction of high-frequency patterns, such as image detail recovery, physics-informed modeling, and signal processing.

Recent theoretical analyses have provided encouraging results for KANs in this regard\cite{kan-spectral-bias}. Specifically, it has been shown that KANs, owing to their univariate spline parameterization, can in principle overcome the spectral bias that plagues conventional MLPs\cite{kan-noise, kan-regular}. The key insight is that B-spline basis functions, which serve as the elementary building blocks in KANs, possess localized support in the input space and can represent oscillatory components without the low-frequency preference inherent in the gradient dynamics of standard MLPs. These results suggest that KANs are well suited for learning functions with rich spectral content.

However, a critical assumption underlies virtually all existing analyses of spectral bias: the input variables are assumed to be statistically independent. Under this assumption, the eigenspectrum of the Neural Tangent Kernel (NTK) or the conjugate kernel exhibits a clean, separable structure that facilitates analysis\cite{ntk1, ntk2}. Most theoretical guarantees---including those claiming that KANs are free from spectral bias---are derived under this independence condition. In practice, this assumption is reasonable for many supervised learning tasks where input features are independently sampled or can be approximately decorrelated through preprocessing.

In TSF, however, this assumption fundamentally breaks down. The standard forecasting setup constructs input vectors from lagged observations of the same underlying stochastic process\cite{lag}, i.e., $\mathbf{X}_t = [x_{t-p+1}, x_{t-p+2}, \ldots, x_t]^\top$. By construction, the components of $\mathbf{X}_t$ are temporally autocorrelated, and this autocorrelation can be substantial---especially for smooth or slowly varying processes common in real-world applications\cite{autocorrelation}. Although this issue is of clear practical importance, existing studies have provided only limited empirical evidence\cite{bias-time-1, bias-time-2}, and rigorous theoretical analysis of how temporal autocorrelation among input variables affects the spectral bias of KANs or, more broadly, of neural networks in the TSF setting, remains largely absent.

In this paper, through a combination of theoretical analysis and systematic experiments, we arrive at an unexpected and practically significant finding: temporal autocorrelation among input variables reintroduces spectral bias in KANs, and the severity of this bias increases monotonically with the degree of autocorrelation. Concretely, by taking into account the correlation structure among input variables, we re-derive the condition number of the Hessian matrix of the mean squared error (MSE) loss with respect to the network weights during KAN training. We find that both the upper and lower bounds of this condition number are governed by the extreme eigenvalues of the input autocorrelation matrix~$\mathbf{R}$. Furthermore, as the correlation among input variables strengthens, $\mathbf{R}$ becomes increasingly ill-conditioned, causing both bounds of the Hessian condition number to grow. A larger Hessian condition number implies that the optimization landscape is more anisotropic across different frequency components, thereby exacerbating the spectral bias toward low frequencies. This prediction of the leading-order analysis is supported by controlled experiments on synthetic time series with varying autocorrelation strengths, where we observe that KANs progressively lose their ability to learn high-frequency components as autocorrelation increases.

To address this limitation, we propose a simple yet effective remedy: applying the Discrete Cosine Transform (DCT)\cite{dct} to the input vectors before feeding them into the KAN. The DCT is an orthogonal transform that approximately diagonalizes the covariance matrix of first-order Markov processes and, more generally, can reduce second-order temporal correlations in temporally correlated signals. By transforming autocorrelated lagged inputs into spectral coefficients with weaker cross-correlations, DCT makes the input representation closer to the weakly correlated setting under which KANs are expected to suffer less from spectral bias. Experimental results on synthetic TSF tasks demonstrate that DCT-KANs exhibit a substantially reduced low-frequency preference, supporting the effectiveness of the proposed approach. Furthermore, these results also provide empirical evidence for our theoretical analysis, suggesting that the spectral bias observed in KANs for TSF is largely associated with temporal autocorrelation among input variables rather than being solely an intrinsic limitation of the architecture itself.

The main contributions of this paper are summarized as follows:

\begin{itemize}
    \item We point out that existing studies on spectral bias are predominantly conducted under the assumption of statistically independent inputs, which does not hold in TSF where inputs are lagged observations with inherent temporal autocorrelation. This calls into question whether the guarantee of KANs being free from spectral bias remains valid in the TSF setting.
    
    \item Under a leading-order covariance-transfer approximation, we show that the upper and lower bounds of the effective Hessian condition number depend explicitly on the extreme eigenvalues of the input autocorrelation matrix. This analysis suggests that stronger temporal autocorrelation can amplify the anisotropy of the optimization landscape and thereby intensify spectral bias. Controlled experiments on synthetic time series further support this prediction.

    \item We propose DCT-KAN, which applies the DCT to decorrelate inputs before they enter the KAN. Experiments confirm that DCT-KAN substantially alleviates spectral bias in TSF, serving as converse evidence that the observed spectral bias is induced by input autocorrelation rather than being an intrinsic limitation of the KAN architecture.
\end{itemize}

\section{Background}

\subsection{KANs}

Kolmogorov--Arnold Networks (KANs)\cite{kan} are neural networks inspired by the Kolmogorov--Arnold representation theorem\cite{kanthe1, kanthe2} and are proposed as an alternative to conventional multi-layer perceptrons (MLPs). Unlike MLPs, which use fixed nonlinear activation functions on nodes and learn scalar weights on edges, KANs place learnable univariate functions on edges, while nodes only perform summation. Therefore, a KAN does not use conventional linear weight matrices; instead, each scalar weight is replaced by a trainable one-dimensional nonlinear function.

A general KAN can have arbitrary depth and width. Let its architecture be denoted by $[n_0,n_1,\ldots,n_L]$, where $n_0$ and $n_L$ are the input and output dimensions, respectively, and $n_1,\ldots,n_{L-1}$ are freely chosen hidden-layer widths. For the $l$-th layer, the edge connecting node $i$ in layer $l$ to node $j$ in layer $l+1$ is associated with a learnable function $\phi_{l,j,i}(\cdot)$. The activation of node $j$ in the next layer is computed as
\begin{equation}
x_{l+1,j}
=
\sum_{i=1}^{n_l}
\phi_{l,j,i}(x_{l,i}),
\qquad
j=1,\ldots,n_{l+1}.
\label{eq:kan_layer}
\end{equation}
Thus, each KAN layer can be regarded as a matrix of univariate functions, and the whole network is obtained by composing these function matrices:
\begin{equation}
\mathrm{KAN}(\mathbf{x})
=
(\Phi_{L-1}\circ \Phi_{L-2}\circ \cdots \circ \Phi_0)(\mathbf{x}).
\label{eq:kan_composition}
\end{equation}

In practice, each edge function is usually parameterized by a residual basis function and a spline component:
\begin{equation}
\phi(z)=b(z)+\operatorname{spline}(z),
\label{eq:kan_edge}
\end{equation}

where $b(z)$ is commonly chosen as the SiLU function and serves as a residual-like basis function in the edge activation. In other words, it provides a stable baseline transformation, while the spline term learns an additional data-adaptive correction. The spline component $\operatorname{spline}(z)$ is represented as a linear combination of B-spline basis functions with learnable coefficients. Owing to the local support of B-splines, KANs can flexibly refine nonlinear transformations on each edge.

Since the original KAN implementation based on B-splines can be computationally expensive, FastKAN\cite{fastkan} has been proposed as an efficient variant of KAN. FastKAN replaces the cubic B-spline basis functions in KANs with Gaussian radial basis functions (RBFs), which significantly simplifies the computation while preserving the functional form of learnable edge activations. Theoretically, FastKAN shows that KANs with spline bases can be viewed as RBF networks with fixed centers, establishing the functional equivalence between FastKAN and the original KAN formulation. Therefore, in the experimental part of this paper, we use FastKAN as the implementation of KAN for efficiency.

\subsection{Spectral Bias}

Spectral bias, also known as the frequency principle, refers to the training phenomenon that neural networks optimized by gradient-based methods tend to learn low-frequency components of the target function earlier than high-frequency components \cite{mlp-spectral-bias1, mlp-spectral-bias2}. To illustrate this idea, consider a target function $f$ defined on a bounded domain $\Omega$ and its Fourier decomposition
\begin{equation}
f(\mathbf{x})
=
\sum_{\boldsymbol{\omega}\in\mathcal{K}}
\hat{f}(\boldsymbol{\omega})
e^{\mathrm{i}\boldsymbol{\omega}^{\top}\mathbf{x}},
\label{eq:fourier_decomposition}
\end{equation}
where $\boldsymbol{\omega}$ denotes the frequency vector and $\hat{f}(\boldsymbol{\omega})$ is the corresponding Fourier coefficient. Let $f_{\boldsymbol{\theta}}$ be the function represented by a neural network during training. The frequency-wise approximation error can be written as
\begin{equation}
e_{\boldsymbol{\omega}}(t)
=
\left|
\hat{f}_{\boldsymbol{\theta}(t)}(\boldsymbol{\omega})
-
\hat{f}(\boldsymbol{\omega})
\right|.
\label{eq:frequency_error}
\end{equation}
A network is said to exhibit spectral bias if, for two frequencies satisfying
$\|\boldsymbol{\omega}_1\| < \|\boldsymbol{\omega}_2\|$, the low-frequency error
$e_{\boldsymbol{\omega}_1}(t)$ decays much faster than the high-frequency error
$e_{\boldsymbol{\omega}_2}(t)$ during training.

From an optimization perspective, spectral bias has been widely studied through the eigenspectrum of the Hessian matrix or the neural tangent kernel (NTK) associated with the loss landscape \cite{mlp-spectral-bias1, mlp-spectral-bias2, ntk1, ntk2}. For the mean squared error loss, in the linearized training regime, the parameter-space Hessian and the function-space NTK share the same nonzero eigenvalues. Therefore, the convergence speed of each functional component is governed by the eigenvalue associated with the corresponding eigenfunction.

Existing studies\cite{mlp-spectral-bias1, mlp-spectral-bias2} have established a close correspondence between these eigenvalues and the frequency content of the learned function. In particular, when the input distribution is uniform on a periodic or compact domain, or when the kernel is approximately translation-invariant, Fourier modes serve as exact or approximate eigenfunctions of the NTK or related kernel operators. In this case, each frequency component can be associated with a kernel eigenvalue, denoted by $\lambda(\boldsymbol{\omega})$. For conventional MLPs, these eigenvalues typically decay as the frequency magnitude $\|\boldsymbol{\omega}\|$ increases. Consequently, low-frequency Fourier modes correspond to larger eigenvalues and are learned faster, whereas high-frequency modes correspond to smaller eigenvalues and are learned more slowly.

More generally, even when the kernel is not exactly diagonalized by the Fourier basis, the dominant eigenfunctions of neural-network kernels are usually smoother and less oscillatory, while highly oscillatory components are associated with smaller eigenvalues. This provides an eigenvalue-based explanation for the frequency principle: gradient descent progresses rapidly along large-eigenvalue directions, which are typically aligned with low-frequency components, and slowly along small-eigenvalue directions, which are typically aligned with high-frequency components.

In a linearized regime, if the error is decomposed along kernel eigenfunctions as
\begin{equation}
e(t)
=
\sum_{m} a_m(t)\psi_m,
\end{equation}
then the gradient flow dynamics satisfy
\begin{equation}
a_m(t)
=
a_m(0)\exp(-\lambda_m t),
\label{eq:kernel_decay}
\end{equation}
where $\lambda_m$ is the eigenvalue corresponding to eigenfunction $\psi_m$. Therefore, under the Hessian/NTK-based characterization of spectral bias, 
the severity of spectral bias can be quantified by the disparity between 
the eigenvalues associated with low- and high-frequency components. 
A more ill-conditioned Hessian or NTK spectrum implies a larger gap between 
the convergence rates of different frequency modes, and hence a stronger 
frequency-dependent learning bias \cite{mlp-spectral-bias1, mlp-spectral-bias2}.

For conventional MLPs, especially ReLU or tanh networks, spectral bias has been widely observed and theoretically analyzed \cite{mlp-spectral-bias1, mlp-spectral-bias2}. These studies show that, from the Hessian/NTK perspective, low-frequency components are usually aligned with better-conditioned directions of the optimization problem, whereas high-frequency components correspond to poorly conditioned directions. Consequently, a larger effective condition number indicates a larger separation between the learning speeds of low- and high-frequency components, and thus a more severe spectral bias. For a two-layer ReLU MLP with width $n$, the condition number of the Hessian matrix $H_{\mathrm{MLP}}$ associated with the least-squares objective has been shown to scale as
\begin{equation}
\kappa(H_{\mathrm{MLP}})
=
\frac{\lambda_{\max}(H_{\mathrm{MLP}})}
{\lambda_{\min}^{+}(H_{\mathrm{MLP}})}
\propto n^{4},
\label{eq:mlp_condition_number}
\end{equation}
where $\lambda_{\min}^{+}(\cdot)$ denotes the smallest positive eigenvalue. This large condition number indicates that the optimization landscape of MLPs is highly anisotropic. Consequently, gradient descent makes rapid progress along some dominant directions, typically corresponding to low-frequency components, while progress along high-frequency directions is much slower. This provides an optimization-based explanation for the strong low-frequency preference of MLPs.

Compared with MLPs, KANs exhibit substantially different spectral behavior because their learnable transformations are placed on edges and parameterized by local univariate basis functions. Existing analysis mainly considers a single-layer KAN without the SiLU residual term\cite{kan-spectral-bias}. Given an input $\mathbf{x}\in\mathbb{R}^{d}$ and an output dimension $d'$, such a KAN can be written as
\begin{equation}
\mathrm{KAN}(\mathbf{x},\boldsymbol{\theta})_i
=
\sum_{j=1}^{d}
\sum_{\ell=1}^{G+k-1}
c_{ij\ell} B_{\ell}(x_j),
\qquad
i=1,\ldots,d',
\label{eq:shallow_kan}
\end{equation}
where $B_{\ell}(\cdot)$ denotes the $\ell$-th B-spline basis function, $G$ is the grid size, $k$ is the spline degree, and $\boldsymbol{\theta}=\{c_{ij\ell}\}$ represents the trainable spline coefficients. Since the model is linear with respect to the coefficients $c_{ij\ell}$, the continuous least-squares loss
\begin{equation}
\mathcal{L}(\boldsymbol{\theta})
= \frac{1}{2}
\int_{\Omega}
\left\|
f(\mathbf{x})
-
\mathrm{KAN}(\mathbf{x},\boldsymbol{\theta})
\right\|^2
d\mathbf{x},
\qquad
\Omega=[-1,1]^d,
\label{eq:kan_l2_loss}
\end{equation}
is a quadratic function of $\boldsymbol{\theta}$:
\begin{equation}
\mathcal{L}(\boldsymbol{\theta})
=
\frac{1}{2}\boldsymbol{\theta}^{\top}M\boldsymbol{\theta}
+
\mathbf{b}^{\top}\boldsymbol{\theta}
+
C,
\label{eq:kan_quadratic_loss}
\end{equation}
where $M$ is the Hessian matrix of the loss.

The entries of the Hessian matrix $M$ are determined by the inner products between B-spline basis functions:
\begin{equation}
M_{(i,j,\ell),(i',j',\ell')}
=
\begin{cases}
\displaystyle
\int_{\Omega}
B_{\ell}(x_j)B_{\ell'}(x_{j'})
d\mathbf{x},
& i=i',\\[2mm]
0,
& i\neq i'.
\end{cases}
\label{eq:kan_hessian}
\end{equation}
Therefore, the convergence of gradient descent is governed by the eigenspectrum of $M$. Let
\[
0\leq \lambda_1(M)\leq \lambda_2(M)\leq \cdots \leq \lambda_N(M)
\]
be the eigenvalues of $M$, where
\begin{equation}
N=(G+k-1)dd'.
\end{equation}
For a single-layer KAN, the following effective condition number is bounded:
\begin{equation}
\kappa_{\mathrm{KAN}}
=
\frac{\lambda_N(M)}
{\lambda_{d'(d-1)+1}(M)}
\leq C_k d,
\label{eq:kan_condition_number}
\end{equation}
where $C_k$ is a constant depending only on the spline degree $k$. The first $d'(d-1)$ degenerate eigen-directions are excluded because the KAN parameterization is not unique: for example, constant functions can be represented through B-spline expansions along different input coordinates. After removing these degenerate directions, the remaining Hessian spectrum is well conditioned.

Equation~\eqref{eq:kan_condition_number} provides the key theoretical explanation for why KANs are expected to suffer much less from spectral bias than MLPs. Unlike the MLP condition number in \eqref{eq:mlp_condition_number}, which grows rapidly with the network width, the effective condition number of a shallow KAN is bounded by a quantity that is independent of the grid size $G$. This means that increasing the number of spline grid points improves the representational resolution of KANs without necessarily making the optimization problem increasingly ill-conditioned. Consequently, different frequency components tend to be learned at more comparable rates.

The local support of B-spline basis functions further explains this behavior. Each B-spline basis function is active only on a small interval of the input domain, allowing KANs to model localized and oscillatory structures more directly than MLPs with global activation patterns. In this sense, the grid size $G$ controls the finest frequency scale that a single KAN layer can represent. A larger $G$ provides a finer partition of the input domain and hence improves the ability of the KAN to represent high-frequency components. Moreover, the grid extension strategy used in KAN training can be viewed as a multi-resolution learning mechanism: the network can first learn coarse structures on a small grid and then progressively refine the spline grid to capture finer details.

Nevertheless, It should be noted that the above conclusions are derived under the assumption that different input variables are statistically independent\cite{mlp-spectral-bias1, kan-spectral-bias}. This assumption is essential for the existing spectral-bias analysis of KANs, but it does not generally hold in time series forecasting, where input variables are constructed from lagged observations.

\section{Spectral Bias of KANs in Time Series Forecasting}

The preceding discussion shows that KANs are expected to be much less affected by spectral bias than MLPs when the input variables are statistically independent. However, this independence assumption is generally violated in time series forecasting. In a standard autoregressive forecasting setting, the input vector is constructed from consecutive lagged observations of the same time series, and hence different input coordinates are naturally autocorrelated. In this section, we show that such temporal autocorrelation changes the Hessian spectrum of KAN training and reintroduces spectral bias.

\subsection{Problem Setup and Notation}

Consider a univariate time series $\{x_t\}_{t\in\mathbb{Z}}$. Given a look-back window of length $p$, the input vector is defined as
\begin{equation}
\mathbf{X}_t
=
[X_{t,1},X_{t,2},\ldots,X_{t,p}]^\top
=
[x_t,x_{t-1},\ldots,x_{t-p+1}]^\top .
\end{equation}
We assume that each input coordinate has been normalized. Let
\begin{equation}
R_{jj'}
=
\operatorname{Corr}(X_{t,j},X_{t,j'})
=
r(|j-j'|)
\end{equation}
denote the autocorrelation matrix of the lagged input vector. For a stationary time series, $R$ is a symmetric Toeplitz correlation matrix with $\operatorname{Tr}(R)=p$.

Following the standard spectral-bias analysis of KANs, we focus on a single-layer KAN and omit the residual SiLU term. Let
\begin{equation}
m = G+k-1
\end{equation}
be the number of B-spline basis functions, where $G$ is the grid size and $k$ is the spline degree. The model is written as
\begin{equation}
\hat{y}_{t+1}
=
f_{\boldsymbol{\theta}}(\mathbf{X}_t)
=
\sum_{j=1}^{p}
\sum_{\ell=1}^{m}
c_{j\ell}B_{\ell}(X_{t,j}),
\label{eq:tsf_kan_model}
\end{equation}
where $B_{\ell}(\cdot)$ denotes the $\ell$-th B-spline basis function. The population MSE loss is
\begin{equation}
\mathcal{L}(\boldsymbol{\theta})
=
\frac{1}{2}
\mathbb{E}
\left[
\left(
y_{t+1}
-
f_{\boldsymbol{\theta}}(\mathbf{X}_t)
\right)^2
\right].
\label{eq:tsf_kan_loss}
\end{equation}

\subsection{Leading-Order Hessian Approximation Under Temporal Autocorrelation}

Before deriving the Hessian structure, we emphasize that the following analysis is based on an analytical closure approximation. For nonlinear basis functions, the cross-covariance matrix 
\(\operatorname{Cov}(\mathbf{b}(X_{t,j}),\mathbf{b}(X_{t,j'}))\) is generally not determined solely by the scalar correlation coefficient \(R_{jj'}\). It also depends on the full joint distribution of the lagged variables and on the nonlinear shape of the basis functions. Therefore, the Kronecker-product Hessian derived below should be interpreted as a leading-order approximation rather than as an exact identity for arbitrary autocorrelated inputs.

Define the B-spline basis vector
\begin{equation}
\mathbf{b}(z)
=
[B_1(z),B_2(z),\ldots,B_m(z)]^\top .
\end{equation}
Let
\begin{equation}
\boldsymbol{\nu}
=
\mathbb{E}[\mathbf{b}(z)],
\qquad
C
=
\mathbb{E}[\mathbf{b}(z)\mathbf{b}(z)^\top],
\qquad
D
=
\boldsymbol{\nu}\boldsymbol{\nu}^\top .
\end{equation}
Here, $C$ is the Gram matrix of the B-spline basis functions, while $C-D$ is the covariance matrix of the centered basis vector.

\begin{assumption}[Covariance-transfer closure]
\label{ass:cov_transfer}
For analytical tractability, we approximate the cross-covariance matrix of the transformed lagged inputs by transferring the scalar autocorrelation coefficient of the original lagged variables to the covariance of the transformed basis vector:
\begin{equation}
\operatorname{Cov}
\left(
\mathbf{b}(X_{t,j}),
\mathbf{b}(X_{t,j'})
\right)
\approx
R_{jj'}(C-D).
\label{eq:cov_transfer_assumption}
\end{equation}
This approximation is exact for the diagonal blocks with \(j=j'\), and is also consistent with the independent-input case where \(R_{jj'}=0\) for \(j\neq j'\). For general nonlinear basis functions and general joint distributions, however, \eqref{eq:cov_transfer_assumption} should be understood as a leading-order closure approximation rather than an exact identity.
\end{assumption}

\begin{proposition}[Leading-order Hessian under autocorrelated lagged inputs]
\label{prop:tsf_hessian}
Under Assumption~\ref{ass:cov_transfer}, the population Hessian 
\(M=\nabla^2_{\boldsymbol{\theta}}\mathcal{L}(\boldsymbol{\theta})\) of the single-layer KAN admits the following leading-order block approximation:
\begin{equation}
M_{jj'}
\approx
D+R_{jj'}(C-D),
\qquad
j,j'=1,\ldots,p,
\label{eq:tsf_hessian_block}
\end{equation}
where each block \(M_{jj'}\in\mathbb{R}^{m\times m}\). Equivalently,
\begin{equation}
M
\approx
J_p\otimes D
+
R\otimes(C-D),
\label{eq:tsf_hessian_kron}
\end{equation}
where \(J_p=\mathbf{1}_p\mathbf{1}_p^\top\) and \(\otimes\) denotes the Kronecker product.
\end{proposition}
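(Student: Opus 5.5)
The plan is a direct computation of the population Hessian, followed by the covariance-transfer closure. Since the model \eqref{eq:tsf_kan_model} is linear in the coefficients $\boldsymbol{\theta}=\{c_{j\ell}\}$, the loss \eqref{eq:tsf_kan_loss} is an exact quadratic in $\boldsymbol{\theta}$, and its Hessian does not depend on $\boldsymbol{\theta}$ or on the target $y_{t+1}$. Stack the coefficients as $\boldsymbol{\theta}=[\mathbf{c}_1^\top,\ldots,\mathbf{c}_p^\top]^\top$ with $\mathbf{c}_j=[c_{j1},\ldots,c_{jm}]^\top$, so that $f_{\boldsymbol{\theta}}(\mathbf{X}_t)=\sum_j \mathbf{c}_j^\top \mathbf{b}(X_{t,j})$. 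Differentiating twice, using $\partial f_{\boldsymbol{\theta}}/\partial \mathbf{c}_j=\mathbf{b}(X_{t,j})$ and the vanishing second derivative of $f_{\boldsymbol{\theta}}$, gives the exact identity
\begin{equation*}
M_{jj'}=\mathbb{E}\bigl[\mathbf{b}(X_{t,j})\,\mathbf{b}(X_{t,j'})^\top\bigr],\qquad j,j'=1,\ldots,p .
\end{equation*}
Exchanging differentiation and expectation is harmless here, because B-splines are bounded.

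Next, I split each block into its mean part and its covariance part:
\begin{equation*}
\mathbb{E}\bigl[\mathbf{b}(X_{t,j})\mathbf{b}(X_{t,j'})^\top\bigr]=\mathbb{E}[\mathbf{b}(X_{t,j})]\,\mathbb{E}[\mathbf{b}(X_{t,j'})]^\top+\operatorname{Cov}\bigl(\mathbf{b}(X_{t,j}),\mathbf{b}(X_{t,j'})\bigr).
\end{equation*}
By stationarity and per-coordinate normalization, each $X_{t,j}$ has the same marginal law as the generic variable $z$ in the definitions of $\boldsymbol{\nu}$, $C$ and $D$. Hence $\mathbb{E}[\mathbf{b}(X_{t,j})]=\boldsymbol{\nu}$ for every $j$, and the first term is exactly $\boldsymbol{\nu}\boldsymbol{\nu}^\top=D$. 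Applying Assumption~\ref{ass:cov_transfer} to the second term replaces it by $R_{jj'}(C-D)$, which yields \eqref{eq:tsf_hessian_block}.

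I will also check consistency on the diagonal. For $j=j'$, $R_{jj}=1$, and the block equals $D+(C-D)=C=\mathbb{E}[\mathbf{b}(z)\mathbf{b}(z)^\top]$ exactly. This confirms that the only approximation enters through the off-diagonal blocks. When $R=I_p$ the formula collapses to the independent-input Hessian \eqref{eq:kan_hessian}, in its probabilistic rather than Lebesgue-measure form.

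For the Kronecker form, I use the block-indexing rule $(A\otimes B)_{jj'}=A_{jj'}B$ for $A\in\mathbb{R}^{p\times p}$ and $B\in\mathbb{R}^{m\times m}$. Since $(J_p)_{jj'}=1$, the $(j,j')$ block of $J_p\otimes D+R\otimes(C-D)$ is $D+R_{jj'}(C-D)$. This matches \eqref{eq:tsf_hessian_block} blockwise, and the ordering of $\boldsymbol{\theta}$ is chosen so that the block index corresponds to the lag index.

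There is no real analytical obstacle, since the approximation is imported wholesale from the assumption. The one point needing care is justifying that the marginal mean $\boldsymbol{\nu}$ is the same for every lag. This requires the identical-marginals consequence of stationarity and normalization, not just equal second moments, so I will state it explicitly.
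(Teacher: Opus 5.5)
Your proposal is correct and follows the paper's own derivation in Appendix~\ref{app:derivation_tsf_hessian} essentially step for step. It uses the same exact block identity $M_{jj'}=\mathbb{E}[\mathbf{b}(X_{t,j})\mathbf{b}(X_{t,j'})^\top]$, the same mean/covariance split with a lag-independent mean $\boldsymbol{\nu}$ from identical marginals, the closure of Assumption~\ref{ass:cov_transfer} on the off-diagonal cross-covariances, and the same blockwise reading of the Kronecker product; the paper additionally keeps the closure error as an explicit residual block matrix $E$ with a Weyl-type eigenvalue bound, which is commentary rather than part of the argument.
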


\noindent\emph{Proof.} The detailed derivation is provided in Appendix~\ref{app:derivation_tsf_hessian}. \hfill $\square$

Proposition~\ref{prop:tsf_hessian} provides a leading-order generalization of the Hessian structure used in the original KAN spectral-bias analysis. If the input coordinates are independent, then \(R=I_p\), and \eqref{eq:tsf_hessian_kron} becomes
\begin{equation}
M
\approx
J_p\otimes D
+
I_p\otimes(C-D).
\end{equation}
In this case, the diagonal blocks of \(M\) are \(C\), while the off-diagonal blocks are \(D\), which recovers the block structure used in the standard analysis of KANs~\cite{kan-spectral-bias}. Therefore, the independent-input result is recovered as a special case of the leading-order approximation.

In time series forecasting, however, $R\neq I_p$ in general. The additional factor $R_{jj'}$ in \eqref{eq:tsf_hessian_block} means that the Hessian spectrum is no longer determined only by the B-spline Gram matrix $C$, but is also shaped by the temporal autocorrelation matrix $R$.

\subsection{Effective Condition Number}

As in the standard KAN analysis, the parameterization contains degenerate directions due to the non-uniqueness of decomposing constant functions across different input coordinates. For the single-output case with $p$ input lags, there are $p-1$ such degenerate directions. We therefore define the effective condition number as
\begin{equation}
\kappa_{\mathrm{TSF}}(M)
=
\frac{\lambda_{\max}(M)}
{\lambda_p(M)},
\end{equation}
where the eigenvalues are arranged in nondecreasing order and $\lambda_p(M)$ denotes the first non-degenerate eigenvalue.

\begin{theorem}[Bounds under the leading-order Hessian approximation]
\label{thm:tsf_condition}
Let \(R\) be positive definite and let \(C\) be the B-spline Gram matrix. Under the leading-order Hessian approximation in Proposition~\ref{prop:tsf_hessian}, the following bounds hold:

\begin{equation}
\frac{1}{2}\lambda_{\max}(R)\lambda_{\max}(C)
\leq
\lambda_{\max}(M)
\leq
p\lambda_{\max}(C),
\label{eq:lambda_max_M_bound}
\end{equation}
and
\begin{equation}
\lambda_{\min}(R)\lambda_{\min}(C)
\leq
\lambda_p(M)
\leq
\lambda_{\min}(R)\lambda_{\max}(C).
\label{eq:lambda_p_M_bound}
\end{equation}
Consequently,
\begin{equation}
\frac{\lambda_{\max}(R)}
{2\lambda_{\min}(R)}
\leq
\kappa_{\mathrm{TSF}}(M)
\leq
\frac{p}
{\lambda_{\min}(R)}
C_k,
\label{eq:tsf_condition_bound}
\end{equation}
where
\begin{equation}
C_k
=
\frac{\lambda_{\max}(C)}
{\lambda_{\min}(C)}
\end{equation}
is the condition number associated with the B-spline Gram matrix\cite{kan-spectral-bias}.
\end{theorem}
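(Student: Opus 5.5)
The plan is to work directly with the Kronecker form \eqref{eq:tsf_hessian_kron}, treating it as an exact identity for the purposes of the theorem, $M = J_p\otimes D + R\otimes(C-D)$. Two structural facts will be used throughout. First, both summands are positive semidefinite: $J_p\succeq 0$ and $D=\boldsymbol{\nu}\boldsymbol{\nu}^\top\succeq 0$; $R\succ 0$ by hypothesis; and $C-D$ is a covariance matrix. Second, B-splines form a partition of unity, $\mathbf{1}_m^\top\mathbf{b}(z)=1$, so $\boldsymbol{\nu}^\top\mathbf{1}_m=1$. This gives $(C-D)\mathbf{1}_m=\boldsymbol{\nu}-\boldsymbol{\nu}=0$. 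Hence $M(\mathbf{a}\otimes\mathbf{1}_m)=(\mathbf{1}_p^\top\mathbf{a})\,\mathbf{1}_p\otimes\boldsymbol{\nu}$, which vanishes whenever $\mathbf{1}_p^\top\mathbf{a}=0$. This exhibits the $(p-1)$-dimensional degenerate kernel $\mathcal{K}=\{\mathbf{a}\otimes\mathbf{1}_m:\mathbf{1}_p^\top\mathbf{a}=0\}$ and justifies calling $\lambda_p(M)$ the first non-degenerate eigenvalue. Finally, since $R_{jj}=1$, every diagonal block satisfies $M_{jj}=D+(C-D)=C$.

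\emph{Bounds on $\lambda_{\max}(M)$.} For the upper bound, take a unit vector $\mathbf{v}=(\mathbf{v}_1,\dots,\mathbf{v}_p)$. Because $M\succeq 0$, its off-diagonal blocks satisfy $|\mathbf{v}_j^\top M_{jj'}\mathbf{v}_{j'}|\le \sqrt{\mathbf{v}_j^\top C\mathbf{v}_j}\,\sqrt{\mathbf{v}_{j'}^\top C\mathbf{v}_{j'}}$. Summing over blocks and applying Cauchy--Schwarz gives
\[
\mathbf{v}^\top M\mathbf{v}\le\Big(\sum_j\sqrt{\mathbf{v}_j^\top C\mathbf{v}_j}\Big)^2\le p\sum_j \mathbf{v}_j^\top C\mathbf{v}_j\le p\,\lambda_{\max}(C).
\]
For the lower bound, positive semidefiniteness of both summands gives $\lambda_{\max}(M)\ge\max\{\lambda_{\max}(R)\lambda_{\max}(C-D),\;p\|\boldsymbol{\nu}\|^2\}$. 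Since $C=(C-D)+D$, Weyl's inequality gives $\lambda_{\max}(C)\le\lambda_{\max}(C-D)+\|\boldsymbol{\nu}\|^2$, so one of the two terms on the right is at least $\tfrac12\lambda_{\max}(C)$. If it is $\lambda_{\max}(C-D)$, the first term already gives the claim. Otherwise I use $\lambda_{\max}(R)\le\operatorname{Tr}(R)=p$ to get $p\|\boldsymbol{\nu}\|^2\ge\tfrac12\lambda_{\max}(R)\lambda_{\max}(C)$.

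\emph{Bounds on $\lambda_p(M)$.} For the lower bound, $M\succeq R\otimes(C-D)$, so $\lambda_p(M)\ge\lambda_p(R\otimes(C-D))$. The eigenvalues of this Kronecker product are the products $\lambda_i(R)\lambda_k(C-D)$. Since $\lambda_1(C-D)=0$, it has exactly $p$ zero eigenvalues (if $C-D$ has only a one-dimensional kernel; otherwise the bound below is trivially compatible). Its next eigenvalue is therefore at least $\lambda_{\min}(R)\lambda_2(C-D)$. Because $C-D$ is a rank-one downdate of $C$, Cauchy interlacing gives $\lambda_2(C-D)\ge\lambda_1(C)=\lambda_{\min}(C)$. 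I expect this index bookkeeping to be the delicate step: one must check that the $p$-th eigenvalue of $R\otimes(C-D)$ is the first positive one and is not an earlier zero. For the upper bound I use Courant--Fischer with a $p$-dimensional test subspace $\mathcal{K}\oplus\operatorname{span}\{\mathbf{x}\}$, where $\mathbf{x}=\mathbf{u}\otimes\mathbf{w}$, $\mathbf{u}$ is the bottom eigenvector of $R$, and $\mathbf{w}$ is a unit vector with $\boldsymbol{\nu}^\top\mathbf{w}=0$, so that $D\mathbf{w}=0$. Since $\mathcal{K}\subset\ker M$, the cross terms vanish, and the maximal Rayleigh quotient on this subspace is
\[
\frac{\mathbf{x}^\top M\mathbf{x}}{\|\mathbf{x}\|^2}=\lambda_{\min}(R)\,\mathbf{w}^\top C\mathbf{w}\le\lambda_{\min}(R)\lambda_{\max}(C).
\]
It remains to verify that $\mathbf{x}\notin\mathcal{K}$; this holds because $\mathbf{w}$ is not parallel to $\mathbf{1}_m$, given $\boldsymbol{\nu}^\top\mathbf{1}_m=1$.

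\emph{Condition number.} Dividing the bounds gives \eqref{eq:tsf_condition_bound}. The lower bound on $\kappa_{\mathrm{TSF}}(M)$ combines the lower bound on $\lambda_{\max}(M)$ with the upper bound on $\lambda_p(M)$. This yields $\lambda_{\max}(R)/(2\lambda_{\min}(R))$ only after the factor $\lambda_{\max}(C)$ cancels, which it does exactly. The upper bound combines $p\lambda_{\max}(C)$ with $\lambda_{\min}(R)\lambda_{\min}(C)$, giving $pC_k/\lambda_{\min}(R)$.
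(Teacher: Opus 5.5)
Your bounds on $\lambda_{\max}(M)$ are fine. The blockwise Cauchy--Schwarz upper bound uses only $M\succeq 0$ and $M_{jj}=C$, so it suits the approximate $M$ even better than the paper's appeal to the expectation form. The lower bound on $\lambda_p(M)$, however, fails at exactly the index check you flagged. Since $R\succ 0$ and $(C-D)\mathbf{1}_m=\mathbf{0}$, the matrix $R\otimes(C-D)$ annihilates the whole $p$-dimensional space $\{\mathbf{a}\otimes\mathbf{1}_m:\mathbf{a}\in\mathbb{R}^p\}$. So $\lambda_1=\cdots=\lambda_p=0$ for it, and $M\succeq R\otimes(C-D)$ yields only $\lambda_p(M)\geq 0$. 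What your argument actually proves is $\lambda_{p+1}(M)\geq\lambda_{\min}(R)\lambda_{\min}(C)$. The kernel of $M$ is one dimension smaller because the global constant direction $\mathbf{1}_p\otimes\mathbf{1}_m$ is lifted only by $J_p\otimes D$, and your comparison throws that term away. The missing idea is to keep $J_p\otimes D$ and use $r_{\min}=\lambda_{\min}(R)\leq 1\leq p$ to merge it with the $\mathbf{1}_p$-averaged part of $R\otimes(C-D)$. The paper proceeds in four steps: \emph{(i)} write $\mathbf{w}\in\mathcal{K}^\perp$ as $\alpha\mathbf{1}_p\otimes\mathbf{v}_0+\mathbf{z}$ with $\mathbf{z}_j\perp\mathbf{v}_0$; \emph{(ii)} split the $\mathbf{z}_j$ into their mean $\bar{\mathbf{z}}$ and deviations; \emph{(iii)} bound $p^2[\boldsymbol{\nu}^\top(\alpha\mathbf{v}_0+\bar{\mathbf{z}})]^2+r_{\min}p\,\bar{\mathbf{z}}^\top(C-D)\bar{\mathbf{z}}$ below by $r_{\min}p$ times the $C$-quadratic form of $\alpha\mathbf{v}_0+\bar{\mathbf{z}}$; \emph{(iv)} handle the deviations with your interlacing bound $\lambda_2(C-D)\geq\lambda_{\min}(C)$. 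In operator form, with $P=I_p-\tfrac{1}{p}J_p$ and $S=C-D$, one has $M\succeq J_p\otimes D+r_{\min}I_p\otimes S=\tfrac{1}{p}J_p\otimes(pD+r_{\min}S)+r_{\min}P\otimes S\succeq r_{\min}\bigl(\tfrac{1}{p}J_p\otimes C+P\otimes S\bigr)$. The last operator is block diagonal on $\{\mathbf{1}_p\otimes\mathbf{v}\}\oplus(\mathbf{1}_p^\perp\otimes\mathbf{1}_m^\perp)=\mathcal{K}^\perp$, with spectrum there bounded below by $\lambda_{\min}(C)$. Without this step you also have no proof that $\lambda_p(M)>0$, so the upper bound on $\kappa_{\mathrm{TSF}}(M)$ is not established either.

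There is also a smaller slip in your upper bound on $\lambda_p(M)$. On $\mathcal{K}\oplus\operatorname{span}\{\mathbf{x}\}$, the numerator of the Rayleigh quotient of $\mathbf{k}+c\mathbf{x}$ is indeed $c^2\mathbf{x}^\top M\mathbf{x}$. But the denominator $\|\mathbf{k}+c\mathbf{x}\|_2^2$ keeps the cross term $2c\,\mathbf{k}^\top\mathbf{x}$. So the maximum is $\mathbf{x}^\top M\mathbf{x}/\|\Pi_{\mathcal{K}^\perp}\mathbf{x}\|_2^2$, where $\Pi_{\mathcal{K}^\perp}$ is the orthogonal projection onto $\mathcal{K}^\perp$, and this exceeds $\mathbf{x}^\top M\mathbf{x}/\|\mathbf{x}\|_2^2$ unless $\mathbf{x}\perp\mathcal{K}$. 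Checking $\mathbf{x}\notin\mathcal{K}$ only secures the dimension count. Since $\langle\mathbf{u}\otimes\mathbf{w},\mathbf{a}\otimes\mathbf{1}_m\rangle=(\mathbf{u}^\top\mathbf{a})(\mathbf{1}_m^\top\mathbf{w})$, you should also require $\mathbf{1}_m^\top\mathbf{w}=0$, i.e.\ take $\mathbf{w}\in\mathbf{1}_m^\perp\cap\boldsymbol{\nu}^\perp$. This intersection is nontrivial for $m\geq 3$, and it is exactly the paper's choice $\mathbf{v}\in\mathcal{V}_\perp\cap\boldsymbol{\nu}^\perp$. Then $\mathbf{x}\in\mathcal{K}^\perp$, and because $\mathcal{K}\subseteq\ker M$ with $M\succeq 0$, you get $\lambda_p(M)=\lambda_{\min}(M|_{\mathcal{K}^\perp})\leq\lambda_{\min}(R)\lambda_{\max}(C)$ as intended.
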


\noindent\emph{Proof.} The proof is given in Appendices~\ref{app:lambda_max_M} and~\ref{app:lambda_p_M}, where Appendix~\ref{app:lambda_max_M} derives the bounds for $\lambda_{\max}(M)$ and Appendix~\ref{app:lambda_p_M} derives the bounds for $\lambda_p(M)$. \hfill $\square$

Theorem~\ref{thm:tsf_condition} shows that, within the leading-order Hessian approximation, the effective condition number of KAN training in TSF depends explicitly on the spectrum of the input autocorrelation matrix \(R\).
 Compared with the independent-input case, temporal autocorrelation affects the Hessian spectrum through two mechanisms. First, the largest eigenvalue of $M$ is lower-bounded by $\lambda_{\max}(R)\lambda_{\max}(C)$, meaning that the dominant curvature direction becomes stronger when $R$ has a large principal eigenvalue. Second, the first non-degenerate eigenvalue $\lambda_p(M)$ is controlled by $\lambda_{\min}(R)$, so a small minimum eigenvalue of $R$ can substantially reduce the effective curvature in poorly conditioned directions.

When the lagged variables are independent, $R=I_p$, and hence
\begin{equation}
\lambda_{\max}(R)=\lambda_{\min}(R)=1.
\end{equation}
Then \eqref{eq:tsf_condition_bound} reduces to
\begin{equation}
1
\leq
\kappa_{\mathrm{TSF}}(M)
\leq
pC_k.
\end{equation}
This is consistent with the standard KAN conclusion that the effective condition number grows at most linearly with the input dimension and is independent of the spline grid size $G$.

By the way, the bounds in \eqref{eq:tsf_condition_bound} are consistent with each other. Since $R$ is a correlation matrix, its diagonal entries are all equal to one, and thus
\begin{equation}
\operatorname{Tr}(R)=p.
\end{equation}
Therefore, for positive semidefinite $R$, we have
\begin{equation}
\label{eq:app_lambda_R_leq_p}
\lambda_{\max}(R)\leq p.
\end{equation}
Moreover, since $C_k\geq 1$, it follows that
\begin{equation}
\frac{\lambda_{\max}(R)}
{2\lambda_{\min}(R)}
\leq
\frac{p}
{\lambda_{\min}(R)}
C_k.
\end{equation}
Hence, the lower bound in \eqref{eq:tsf_condition_bound} does not exceed the upper bound.

\subsection{Discussion}

The preceding bounds provide an optimization-based explanation for how temporal autocorrelation reintroduces spectral bias into KANs in the TSF setting. According to the Hessian/NTK-based explanation of spectral bias 
\cite{mlp-spectral-bias1, mlp-spectral-bias2}, a larger Hessian condition number indicates a more anisotropic optimization landscape: gradient-based optimization converges rapidly along eigendirections associated with 
large eigenvalues, but slowly along eigendirections associated with small eigenvalues. For lagged inputs generated from temporally autocorrelated time series, the dominant eigendirections of the autocorrelation matrix $R$ typically correspond to smooth and slowly varying patterns along the lag dimension, whereas small-eigenvalue directions are more closely associated with oscillatory and high-frequency patterns. Therefore, as temporal autocorrelation strengthens, the spectrum of $R$ becomes increasingly uneven, which in turn enlarges the separation between fast low-frequency learning directions and slow high-frequency learning directions in the KAN Hessian.

This observation highlights an important distinction between KANs under independent inputs and KANs applied to TSF. Although KANs are expected to avoid the severe grid-size-dependent ill-conditioning observed in conventional MLPs under the independent-input assumption, this favorable property can be weakened when lagged input variables are strongly correlated. In this case, the ill-conditioning of the autocorrelation matrix $R$ is transferred to the Hessian of the KAN training objective, making the optimization dynamics biased toward low-frequency components. Consequently, despite the favorable spectral behavior of KANs in the independent-input setting, standard KANs may still suffer from pronounced spectral bias when applied to highly autocorrelated time series.

\section{Experiments}

In this section, we empirically investigate the spectral behavior of KANs in time series forecasting. We first conduct controlled experiments on synthetic time series to verify whether KANs exhibit spectral bias when the lagged input variables are temporally autocorrelated. The results show that KANs indeed learn low-frequency components much faster than high-frequency components, and that this low-frequency preference becomes increasingly severe as the autocorrelation strength among input variables increases. We then introduce DCT preprocessing to reduce the correlation among lagged inputs. The experimental results demonstrate that DCT-KAN substantially alleviates the observed spectral bias and achieves better overall forecasting performance than the standard KAN. These findings further support our theoretical claim that the spectral bias of KANs in time series forecasting is mainly induced by the autocorrelation among input variables.

\subsection{Experimental Setup}

We conduct controlled experiments on synthetic autoregressive time series to study the influence of input autocorrelation on the spectral behavior of KANs. The univariate time series is generated by an AR($N$) process:
\begin{equation}
x_t
=
\rho_1 x_{t-1}
+
\rho_2\sum_{i=2}^{N}x_{t-i}
+
\sqrt{1-\rho_1^2-(N-1)\rho_2^2}\,\epsilon_t,
\qquad
\epsilon_t\sim\mathcal{N}(0,1),
\label{eq:exp_ar_process}
\end{equation}
where $\rho_1$ controls the main autocorrelation strength. The forecasting input is constructed as
\begin{equation}
\mathbf{X}_t
=
[x_t,x_{t-1},\ldots,x_{t-p+1}]^\top,
\qquad
p=3N.
\end{equation}
All inputs are standardized coordinate-wise using statistics computed from the training set.

To construct target functions with explicit frequency components, we use two projection directions:
\begin{equation}
\mathbf{w}_{\mathrm{easy}}
=
\mathbf{1}_{N}\otimes [1,1,1]^\top,
\qquad
\mathbf{w}_{\mathrm{hard}}
=
\mathbf{1}_{N}\otimes [1,-2,1]^\top,
\label{eq:projection_vectors}
\end{equation}

where $\otimes$ denotes the Kronecker product, so that the basic patterns $[1,1,1]$ and $[1,-2,1]$ are repeated $N$ times, respectively, and both vectors have dimension $p=3N$.
The vector $\mathbf{w}_{\mathrm{easy}}$ acts as a local averaging direction and mainly captures smooth, low-frequency variations along the lag dimension. In contrast, $\mathbf{w}_{\mathrm{hard}}$ is a repeated second-difference pattern, which suppresses smooth trends and emphasizes oscillatory lag structures. Therefore, these two directions allow us to separate easy low-frequency components from harder high-frequency components.

Let
\begin{equation}
V_{\mathrm{easy},t}
=
\mathbf{w}_{\mathrm{easy}}^\top \widetilde{\mathbf{X}}_t,
\qquad
V_{\mathrm{hard},t}
=
\mathbf{w}_{\mathrm{hard}}^\top \widetilde{\mathbf{X}}_t,
\end{equation}
where $\widetilde{\mathbf{X}}_t$ denotes the standardized input. Both projected variables are further normalized to zero mean and unit variance. The target is defined as
\begin{equation}
y_{t+1}
=
\sin(\omega_{\mathrm{low}}V_{\mathrm{easy},t})
+
\sin(\omega_{\mathrm{mid}}V_{\mathrm{hard},t})
+
\sin(\omega_{\mathrm{high}}V_{\mathrm{hard},t})
+
\eta_t,
\label{eq:exp_target}
\end{equation}
where $\eta_t\sim\mathcal{N}(0,0.05^2)$. The detailed settings are summarized in Table~\ref{tab:exp_setup}.

\begin{table}[t]
\centering
\caption{Experimental settings for the synthetic spectral-bias experiments.}
\label{tab:exp_setup}
\begin{tabularx}{\linewidth}{lX}
\toprule
\textbf{Item} & \textbf{Setting} \\
\midrule
Data-generating process &
AR($N$) process in \eqref{eq:exp_ar_process}. \\

Autocorrelation parameter &
$\rho_1\in\{0.1,0.2,\ldots,0.8\}$. \\

AR configurations &
$(N,p,\rho_2)\in\{(1,3,0),(2,6,0.1),(5,15,0.01)\}$. \\

Sequence length &
5000 samples. \\

Train/test split &
80\% / 20\%, split chronologically. \\

Number of runs &
100 independent runs for each $\rho_1$. \\

Target frequencies &
$\omega_{\mathrm{low}}=1$, $\omega_{\mathrm{mid}}=3$, $\omega_{\mathrm{high}}=6$. \\

Observation noise &
$\eta_t\sim\mathcal{N}(0,0.05^2)$. \\

Model &
FastKAN with $\text{width}=[p,50,1]$, $\text{grid}=8$. \\

Optimizer &
Adam with learning rate $0.005$. \\

Batch size / epochs &
256 / 150. \\

KAN input &
Standardized lagged vector $\widetilde{\mathbf{X}}_t$. \\

DCT-KAN input &
Orthonormal DCT of $\widetilde{\mathbf{X}}_t$ along the lag dimension, followed by coordinate-wise standardization using training-set statistics. \\

Evaluation metrics &
Test MSE and component-wise recovery errors for low-, middle-, and high-frequency components. \\
\bottomrule
\end{tabularx}
\end{table}

For component-wise evaluation, we use the known synthetic components. Given a prediction $\hat{y}$ and a true component $C_q$, where $q\in\{\mathrm{low},\mathrm{mid},\mathrm{high}\}$, the recovered amplitude is estimated by
\begin{equation}
\hat{a}_q
=
\operatorname{clip}
\left(
\frac{\operatorname{Cov}(\hat{y},C_q)}
{\operatorname{Var}(C_q)+10^{-8}},
0,1
\right),
\end{equation}
and the corresponding component error is
\begin{equation}
E_q=(1-\hat{a}_q)^2.
\end{equation}
A smaller $E_q$ indicates better recovery of the corresponding frequency component.

\subsection{KANs Exhibit Spectral Bias in Time Series Forecasting}

\begin{figure*}[tbp]
    \centering

    \subfloat[KAN MSE]{
        \includegraphics[width=0.48\textwidth]{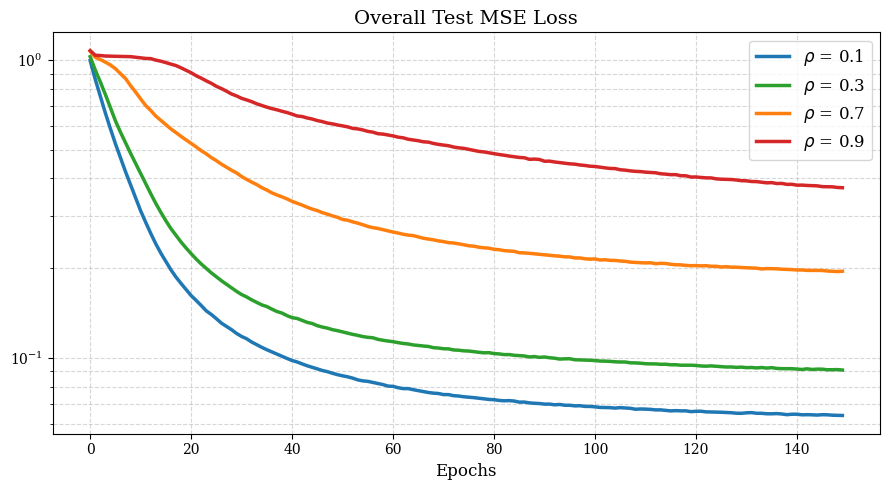}
        \label{fig:epoch-mse}
    }
    \subfloat[DCT-KAN MSE]{
        \includegraphics[width=0.48\textwidth]{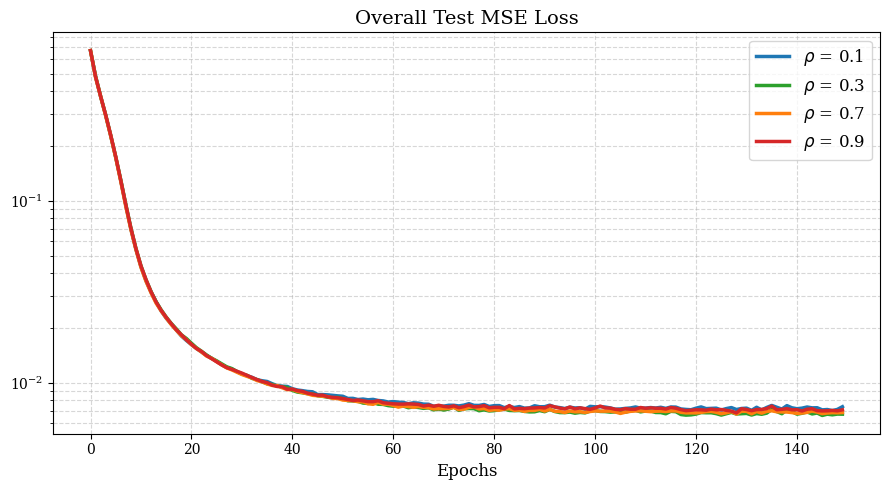}
        \label{fig:epoch-dct-mse}
    }

    \subfloat[KAN Freq. Error]{
        \includegraphics[width=0.48\textwidth]{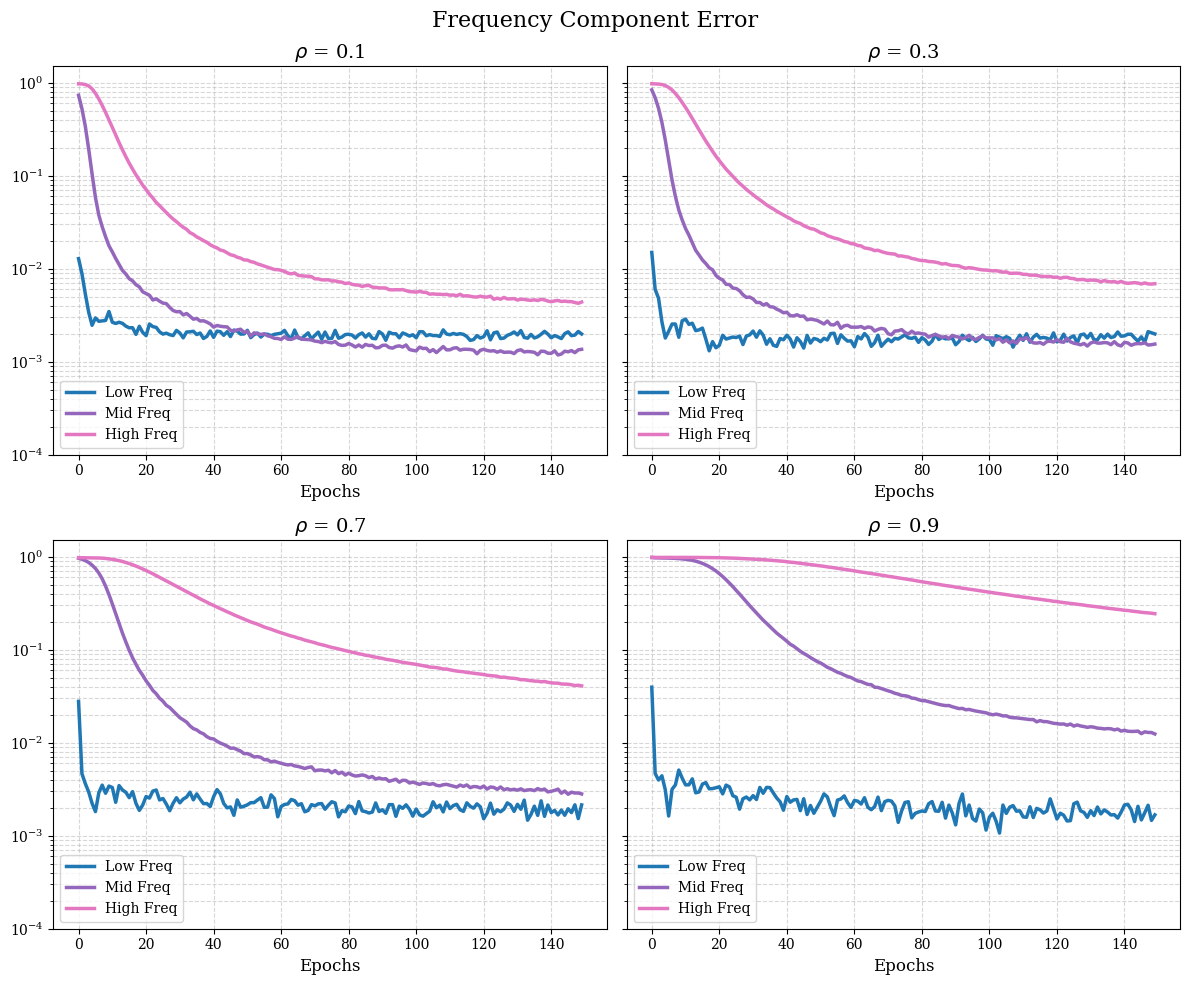}
        \label{fig:epoch-freq}
    }
    \subfloat[DCT-KAN Freq. Error]{
        \includegraphics[width=0.48\textwidth]{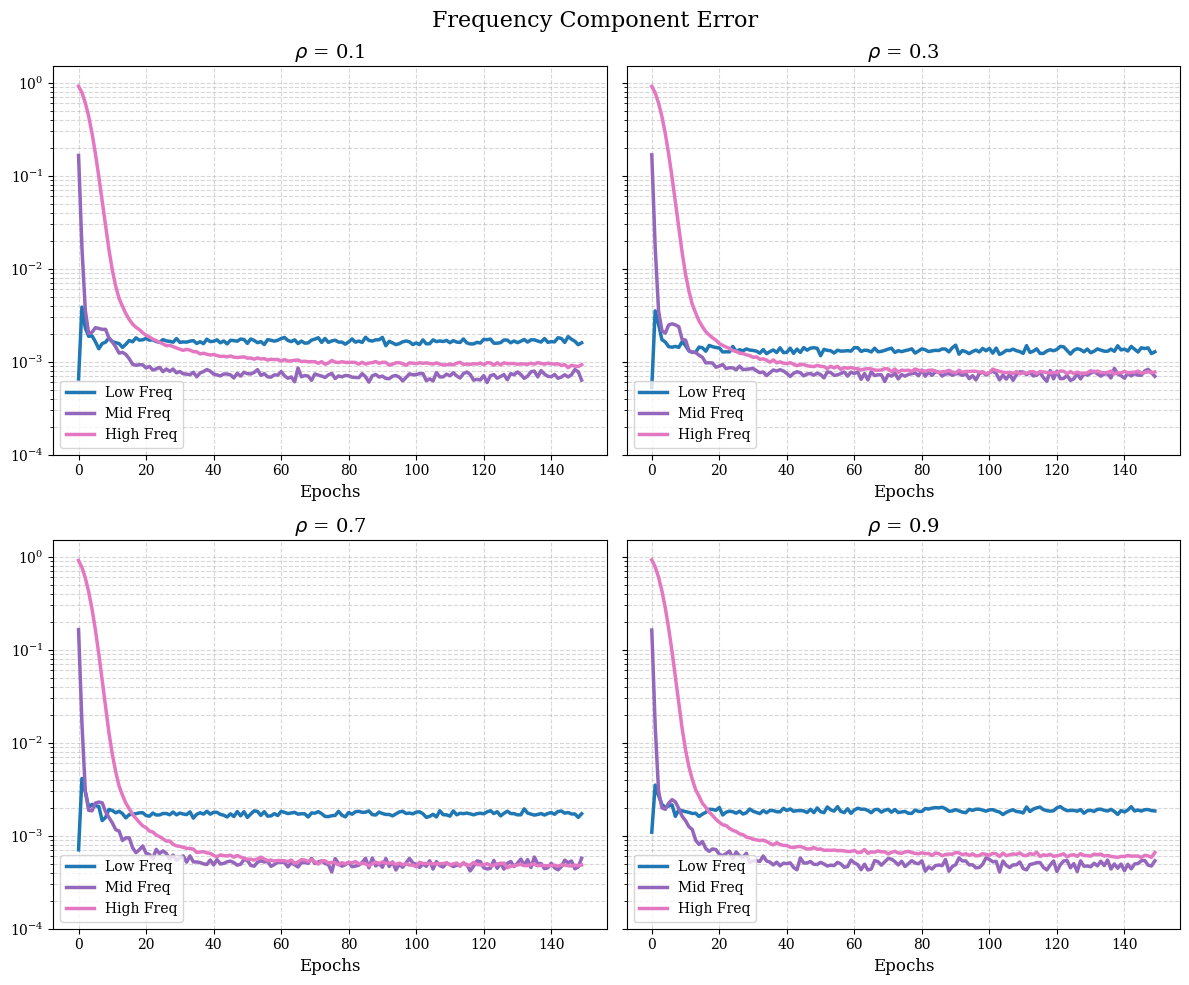}
        \label{fig:epoch-dct-freq}
    }

    \caption{Epoch-wise training dynamics of KAN and DCT-KAN ($N=1$).}
    \label{fig:epoch}
\end{figure*}

\begin{figure*}[tbp]
    \centering

    \subfloat[MSE ($N=1$)]{
        \includegraphics[width=0.33\textwidth]{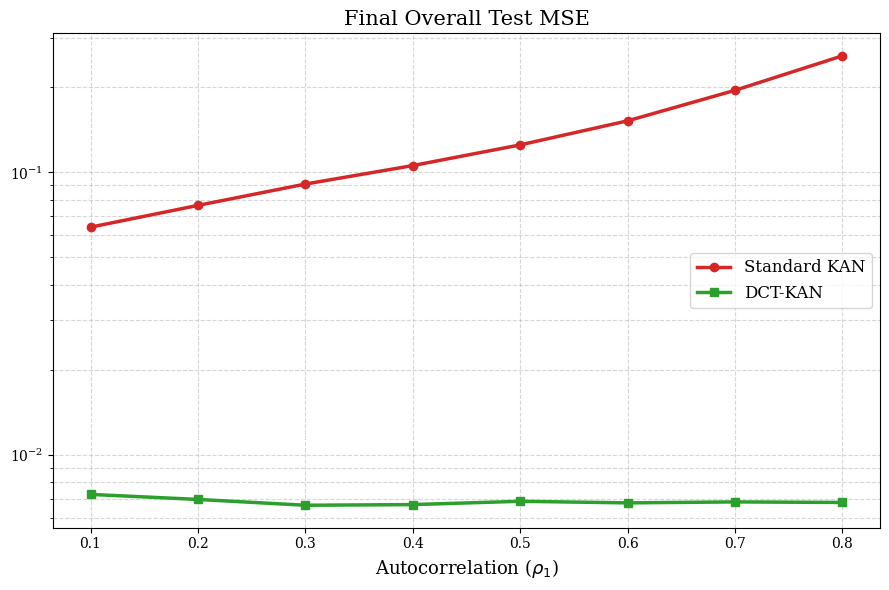}
        \label{fig:rho-mse-1}
    }
    \subfloat[MSE ($N=2$)]{
        \includegraphics[width=0.33\textwidth]{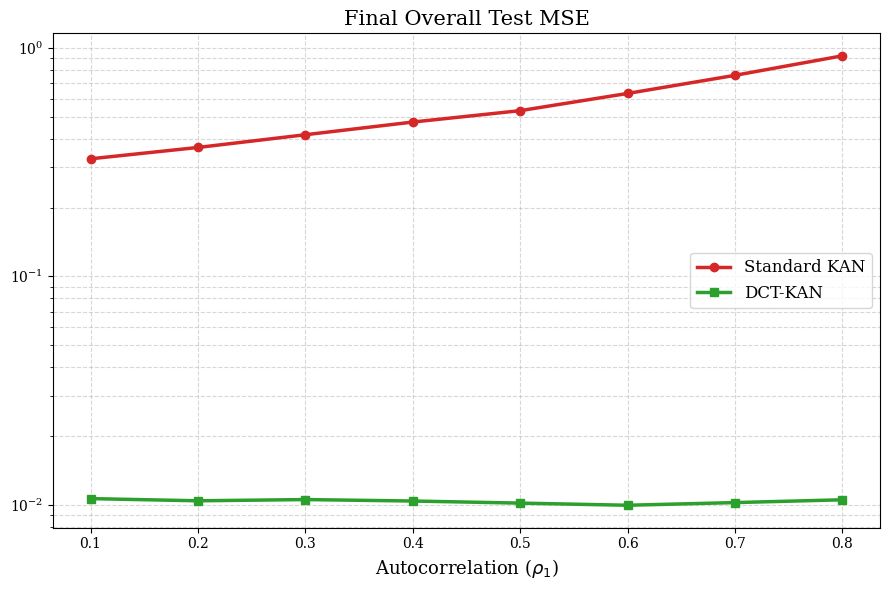}
        \label{fig:rho-mse-2}
    }
    \subfloat[MSE ($N=5$)]{
        \includegraphics[width=0.33\textwidth]{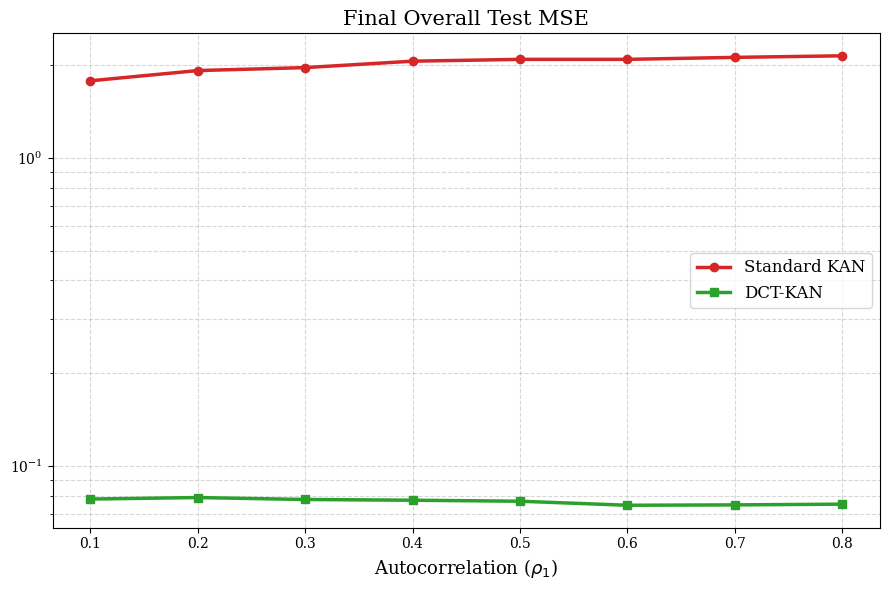}
        \label{fig:rho-mse-5}
    }

    \subfloat[Freq. Error ($N=1$)]{
        \includegraphics[width=0.33\textwidth]{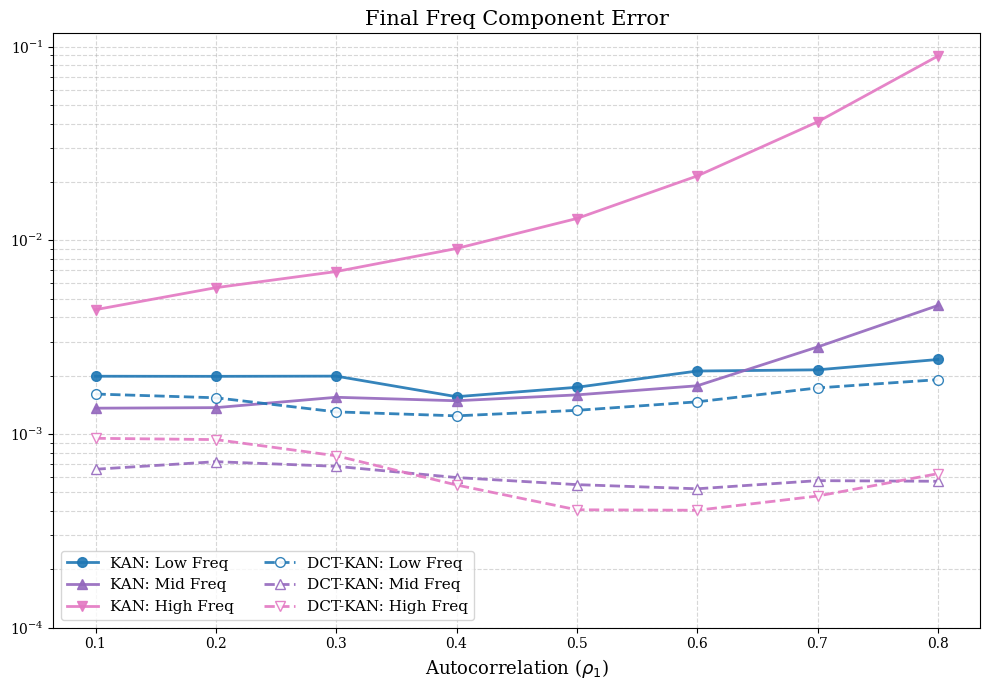}
        \label{fig:rho-freq-1}
    }
    \subfloat[Freq. Error ($N=2$)]{
        \includegraphics[width=0.33\textwidth]{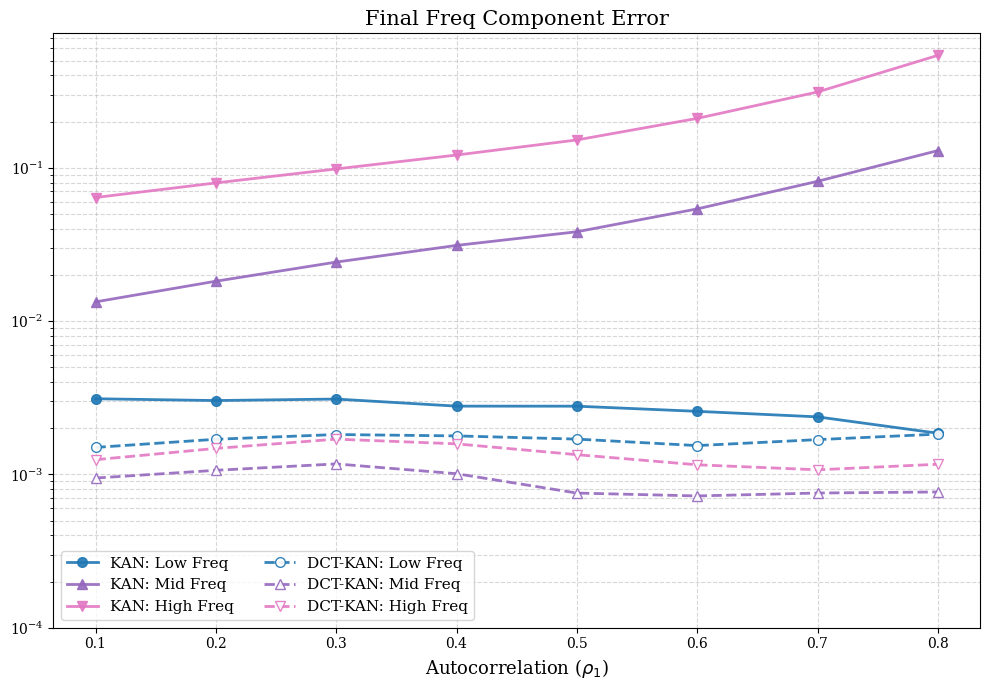}
        \label{fig:rho-freq-2}
    }
    \subfloat[Freq. Error ($N=5$)]{
        \includegraphics[width=0.33\textwidth]{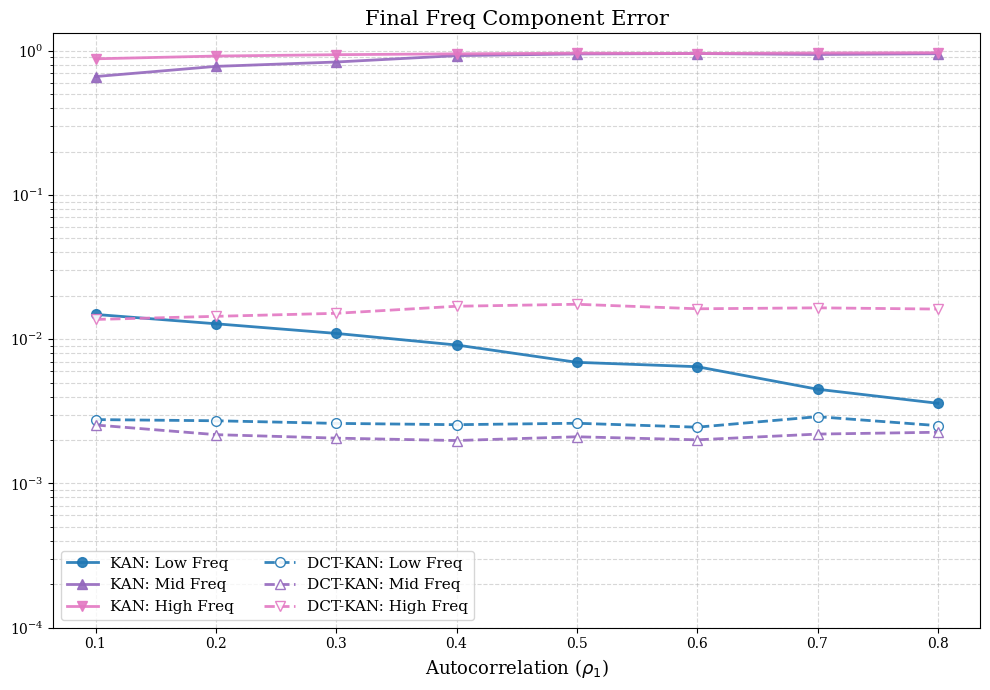}
        \label{fig:rho-freq-5}
    }

    \caption{Autocorrelation-wise performance of KAN and DCT-KAN.}
    \label{fig:rho}
\end{figure*}

We first examine the training dynamics of the standard KAN under different autocorrelation strengths. Figure~\ref{fig:epoch-mse} reports the epoch-wise test MSE for $N=1$. It can be observed that, although the training loss decreases for all values of $\rho_1$, the final converged MSE becomes increasingly large as $\rho_1$ increases. This indicates that stronger temporal autocorrelation makes the forecasting problem substantially more difficult for KANs. More importantly, the component-wise errors in Figure~\ref{fig:epoch-freq} reveal a clear frequency-dependent learning behavior. The low-frequency component is learned rapidly and its recovery error decreases quickly during training, whereas the middle- and high-frequency components converge much more slowly. When $\rho_1$ is large, the high-frequency error remains high even after sufficient training epochs, showing that KANs have difficulty recovering oscillatory components from strongly autocorrelated lagged inputs.

We further investigate this phenomenon by varying the autocorrelation strength more systematically. As shown in Figures~\ref{fig:rho-mse-1}--\ref{fig:rho-mse-5}, the test MSE of KAN consistently increases as $\rho_1$ grows. This trend is observed across all three AR configurations, suggesting that the degradation is not specific to a particular input dimension or AR order. The frequency-wise results in Figures~\ref{fig:rho-freq-1}--\ref{fig:rho-freq-5} provide a more detailed explanation: the recovery error of the low-frequency component remains relatively small, while the errors of the middle- and high-frequency components increase rapidly with $\rho_1$. Therefore, the performance degradation under stronger autocorrelation is mainly caused by the failure to learn higher-frequency components rather than by a uniform deterioration across all components.

The effect becomes even more pronounced as $N$ increases. A larger $N$ corresponds to a longer and more complex lagged input vector, which introduces richer correlation structures among input coordinates. Comparing the results for $N=1$, $N=2$, and $N=5$ in Figure~\ref{fig:rho}, we observe that the spectral bias of KANs becomes progressively stronger as the input sequence becomes more complex. In particular, when $N=5$, KAN almost fails to recover the middle- and high-frequency components, as indicated by their persistently large component errors in Figure~\ref{fig:rho-freq-5}. These empirical observations are consistent with the theoretical analysis in Section~III: stronger temporal autocorrelation makes the autocorrelation matrix $R$ more ill-conditioned, enlarges the effective Hessian condition number, and consequently slows down optimization along directions associated with oscillatory high-frequency patterns.

\subsection{DCT-KAN Alleviates Spectral Bias}

We next examine whether decorrelating the lagged inputs can mitigate the spectral bias observed above. Figure~\ref{fig:epoch-dct-mse} shows the epoch-wise MSE curves of DCT-KAN for $N=1$ under different values of $\rho_1$. In contrast to the standard KAN in Figure~\ref{fig:epoch-mse}, the MSE curves of DCT-KAN are almost indistinguishable across different autocorrelation strengths. Moreover, the final converged MSE is substantially lower than that of the standard KAN, reaching approximately an order-of-magnitude improvement. This indicates that, after DCT preprocessing, the optimization and generalization behavior of KAN becomes much less sensitive to the temporal autocorrelation of the original lagged inputs.
﻿
The component-wise results further support this conclusion. As shown in Figure~\ref{fig:epoch-dct-freq}, DCT-KAN learns the low-, middle-, and high-frequency components much more effectively than the standard KAN. In particular, the recovery errors of the middle- and high-frequency components decrease significantly during training and remain at a low level even when $\rho_1$ is large. Compared with Figure~\ref{fig:epoch-freq}, the gap between the low-frequency component and the higher-frequency components is greatly reduced. This suggests that DCT preprocessing weakens the low-frequency preference of KANs by reducing the correlation among input coordinates.
﻿
The autocorrelation-wise results in Figure~\ref{fig:rho} provide a more systematic comparison. Across all three AR configurations, DCT-KAN consistently achieves much smaller test MSE than the standard KAN, as shown in Figures~\ref{fig:rho-mse-1}--\ref{fig:rho-mse-5}. More importantly, its MSE remains nearly stable as $\rho_1$ increases, whereas the MSE of the standard KAN grows rapidly with stronger autocorrelation. The same pattern can be observed in the component-wise errors in Figures~\ref{fig:rho-freq-1}--\ref{fig:rho-freq-5}: DCT-KAN maintains low recovery errors for all frequency components, and the errors exhibit only weak dependence on $\rho_1$. Even in the most challenging case of $N=5$, where the standard KAN almost fails to learn the middle- and high-frequency components, DCT-KAN still recovers these components effectively. These results confirm that the spectral bias observed in standard KANs for TSF is largely induced by the autocorrelation among lagged inputs, and that DCT preprocessing provides an effective way to alleviate this problem.
﻿

\section{Conclusion}

In this paper, we pointed out that the favorable spectral behavior of KANs under independent inputs does not directly extend to time series forecasting, where lagged input variables are inherently autocorrelated. Through a leading-order Hessian analysis based on a covariance-transfer approximation, we showed that temporal autocorrelation can alter the Hessian spectrum of KAN training and make the effective condition number depend on the extreme eigenvalues of the input autocorrelation matrix. As the autocorrelation strength increases, the optimization landscape becomes more anisotropic, causing KANs to learn low-frequency components much faster than middle- and high-frequency components. Controlled experiments on synthetic time series supported this prediction and showed that the spectral bias becomes increasingly severe under stronger autocorrelation. This conclusion suggests that standard KANs may face substantial difficulties when handling highly autocorrelated time series. To address this issue, we introduced DCT preprocessing to reduce correlations among lagged inputs. The resulting DCT-KAN substantially alleviates the low-frequency preference and improves the recovery of high-frequency components, indicating that the spectral bias observed in KANs for time series forecasting is mainly induced by input autocorrelation rather than being an intrinsic limitation of the KAN architecture.

\appendices

\section{Derivation of Proposition~\ref{prop:tsf_hessian}}
\label{app:derivation_tsf_hessian}

In this appendix, we derive the leading-order Hessian approximation used in Proposition~\ref{prop:tsf_hessian}. The derivation consists of two parts. First, we obtain an exact expression for the Hessian in terms of the cross-covariance matrices of the B-spline-transformed lagged inputs. Second, we introduce the covariance-transfer closure in Assumption~\ref{ass:cov_transfer}. Therefore, the Kronecker-product form in Proposition~\ref{prop:tsf_hessian} should be interpreted as a leading-order analytical approximation rather than as an exact Hessian identity for arbitrary autocorrelated inputs.

Recall that the single-layer KAN considered in the main text is
\begin{equation}
f_{\boldsymbol{\theta}}(\mathbf{X}_t)
=
\sum_{j=1}^{p}
\sum_{\ell=1}^{m}
c_{j\ell}B_{\ell}(X_{t,j}),
\qquad
m=G+k-1 ,
\label{eq:app_kan_model}
\end{equation}
where $p$ is the look-back window length and $m$ is the number of B-spline basis functions. The population MSE loss is
\begin{equation}
\mathcal{L}(\boldsymbol{\theta})
=
\frac{1}{2}
\mathbb{E}
\left[
\left(
y_{t+1}
-
f_{\boldsymbol{\theta}}(\mathbf{X}_t)
\right)^2
\right].
\label{eq:app_loss}
\end{equation}
Since the B-spline basis functions are bounded on the compact input domain considered in this paper, differentiation can be interchanged with expectation.

Let
\begin{equation}
e_t(\boldsymbol{\theta})
=
y_{t+1}
-
f_{\boldsymbol{\theta}}(\mathbf{X}_t)
\end{equation}
be the prediction error. Since the model in \eqref{eq:app_kan_model} is linear with respect to the spline coefficients $c_{j\ell}$, we have
\begin{equation}
\frac{\partial f_{\boldsymbol{\theta}}(\mathbf{X}_t)}
{\partial c_{j\ell}}
=
B_{\ell}(X_{t,j}).
\label{eq:app_partial_model}
\end{equation}
Therefore,
\begin{align}
\frac{\partial \mathcal{L}}
{\partial c_{j\ell}}
&=
\frac{\partial}{\partial c_{j\ell}}
\frac{1}{2}
\mathbb{E}
\left[
e_t(\boldsymbol{\theta})^2
\right]  \notag \\
&=
\mathbb{E}
\left[
e_t(\boldsymbol{\theta})
\frac{\partial e_t(\boldsymbol{\theta})}
{\partial c_{j\ell}}
\right]  \notag \\
&=
-
\mathbb{E}
\left[
\left(
y_{t+1}
-
f_{\boldsymbol{\theta}}(\mathbf{X}_t)
\right)
B_{\ell}(X_{t,j})
\right]  \notag \\
&=
\mathbb{E}
\left[
\left(
f_{\boldsymbol{\theta}}(\mathbf{X}_t)
-
y_{t+1}
\right)
B_{\ell}(X_{t,j})
\right].
\label{eq:app_first_derivative}
\end{align}
Taking another derivative with respect to $c_{j'\ell'}$ gives
\begin{align}
\frac{\partial^2 \mathcal{L}}
{\partial c_{j\ell}\partial c_{j'\ell'}}
&=
\mathbb{E}
\left[
\frac{\partial f_{\boldsymbol{\theta}}(\mathbf{X}_t)}
{\partial c_{j'\ell'}}
B_{\ell}(X_{t,j})
\right]  \notag \\
&=
\mathbb{E}
\left[
B_{\ell}(X_{t,j})
B_{\ell'}(X_{t,j'})
\right].
\label{eq:app_second_derivative}
\end{align}
Thus, the Hessian is independent of the target value $y_{t+1}$ and is determined only by the second-order statistics of the B-spline-transformed inputs.

Define the B-spline basis vector
\begin{equation}
\mathbf{b}(z)
=
[B_1(z),B_2(z),\ldots,B_m(z)]^\top .
\end{equation}
For the $j$-th lagged input variable, denote
\begin{equation}
\mathbf{b}_j
=
\mathbf{b}(X_{t,j}).
\end{equation}
Then the $(j,j')$ block of the Hessian can be written exactly as
\begin{equation}
M_{jj'}
=
\mathbb{E}
\left[
\mathbf{b}_j\mathbf{b}_{j'}^\top
\right],
\label{eq:app_hessian_block_raw}
\end{equation}
where the $(\ell,\ell')$ entry of $M_{jj'}$ is
$\mathbb{E}[B_{\ell}(X_{t,j})B_{\ell'}(X_{t,j'})]$.

By stationarity and marginal normalization, all lagged input coordinates are assumed to have the same marginal distribution. Hence, the mean vector
\begin{equation}
\boldsymbol{\nu}
=
\mathbb{E}[\mathbf{b}_j]
\end{equation}
does not depend on $j$. Decompose each transformed input vector into its mean and centered fluctuation:
\begin{equation}
\mathbf{b}_j
=
\boldsymbol{\nu}
+
\widetilde{\mathbf{b}}_j,
\qquad
\mathbb{E}[\widetilde{\mathbf{b}}_j]=\mathbf{0}.
\label{eq:app_mean_fluctuation}
\end{equation}
Substituting \eqref{eq:app_mean_fluctuation} into \eqref{eq:app_hessian_block_raw}, we obtain
\begin{align}
M_{jj'}
&=
\mathbb{E}
\left[
(\boldsymbol{\nu}+\widetilde{\mathbf{b}}_j)
(\boldsymbol{\nu}+\widetilde{\mathbf{b}}_{j'})^\top
\right]  \notag \\
&=
\boldsymbol{\nu}\boldsymbol{\nu}^\top
+
\mathbb{E}
\left[
\widetilde{\mathbf{b}}_j
\widetilde{\mathbf{b}}_{j'}^\top
\right].
\label{eq:app_block_mean_cov}
\end{align}
Define
\begin{equation}
D
=
\boldsymbol{\nu}\boldsymbol{\nu}^\top ,
\label{eq:app_D_definition}
\end{equation}
and
\begin{equation}
C
=
\mathbb{E}
\left[
\mathbf{b}_j\mathbf{b}_j^\top
\right].
\label{eq:app_C_definition}
\end{equation}
Then the covariance matrix of the centered B-spline basis vector is
\begin{align}
S
&=
\operatorname{Cov}(\mathbf{b}_j)  \notag \\
&=
\mathbb{E}
\left[
(\mathbf{b}_j-\boldsymbol{\nu})
(\mathbf{b}_j-\boldsymbol{\nu})^\top
\right]  \notag \\
&=
C-D .
\label{eq:app_S_definition}
\end{align}
For $j=j'$, we have
\begin{equation}
M_{jj}
=
D+S
=
C.
\label{eq:app_diagonal_block}
\end{equation}

For $j\neq j'$, define the exact cross-covariance matrix
\begin{equation}
\Gamma_{jj'}
=
\operatorname{Cov}
\left(
\mathbf{b}(X_{t,j}),
\mathbf{b}(X_{t,j'})
\right)
=
\mathbb{E}
\left[
\widetilde{\mathbf{b}}_j
\widetilde{\mathbf{b}}_{j'}^\top
\right].
\label{eq:app_exact_cross_cov}
\end{equation}
Then the exact Hessian block is
\begin{equation}
M_{jj'}
=
D+\Gamma_{jj'}.
\label{eq:app_exact_block}
\end{equation}
Equation~\eqref{eq:app_exact_block} is an exact identity. In general, however, for nonlinear basis functions $B_\ell(\cdot)$, the matrix $\Gamma_{jj'}$ is not determined solely by the scalar correlation coefficient
\begin{equation}
R_{jj'}
=
\operatorname{Corr}(X_{t,j},X_{t,j'}).
\end{equation}
It also depends on the full joint distribution of $(X_{t,j},X_{t,j'})$ and on the nonlinear shape of the basis functions.

The first-order covariance-transfer approximation used in Proposition~\ref{prop:tsf_hessian} replaces the exact cross-covariance matrix $\Gamma_{jj'}$ by a scalar-correlation rescaling of the marginal covariance matrix $S=C-D$:
\begin{equation}
\Gamma_{jj'}
\approx
R_{jj'}(C-D).
\label{eq:app_cov_transfer_approx}
\end{equation}
Equivalently, one may write the exact cross-covariance as
\begin{equation}
\Gamma_{jj'}
=
R_{jj'}(C-D)
+
E_{jj'},
\label{eq:app_cov_transfer_residual}
\end{equation}
where
\begin{equation}
E_{jj'}
=
\Gamma_{jj'}-R_{jj'}(C-D)
\label{eq:app_residual_definition}
\end{equation}
is the residual matrix associated with the covariance-transfer approximation. By construction, $E_{jj}=0$, because $R_{jj}=1$ and $\Gamma_{jj}=C-D$.

Thus, the exact Hessian can be written as
\begin{equation}
M_{jj'}
=
D
+
R_{jj'}(C-D)
+
E_{jj'}.
\label{eq:app_hessian_block_with_residual}
\end{equation}
Let $E$ denote the block matrix whose $(j,j')$ block is $E_{jj'}$. Then the exact Hessian admits the decomposition
\begin{equation}
M
=
J_p\otimes D
+
R\otimes(C-D)
+
E,
\label{eq:app_exact_hessian_with_residual}
\end{equation}
where $J_p=\mathbf{1}_p\mathbf{1}_p^\top$.

The approximation in Proposition~\ref{prop:tsf_hessian} corresponds to neglecting the residual matrix $E$, namely,
\begin{equation}
E\approx 0.
\end{equation}
Under this first-order covariance-transfer approximation, \eqref{eq:app_hessian_block_with_residual} reduces to
\begin{equation}
M_{jj'}
\approx
D
+
R_{jj'}(C-D),
\qquad
j,j'=1,\ldots,p,
\label{eq:app_hessian_block_final}
\end{equation}
and the full Hessian becomes
\begin{equation}
M
\approx
J_p\otimes D
+
R\otimes(C-D).
\label{eq:app_hessian_kron_final}
\end{equation}
This is the Kronecker-product representation stated in Proposition~\ref{prop:tsf_hessian}.

We emphasize that \eqref{eq:app_cov_transfer_approx} is not a universal identity for arbitrary nonlinear transformations or arbitrary joint distributions. It is a second-order closure approximation: the shape of the cross-covariance matrix of the transformed variables is approximated by the marginal covariance matrix $C-D$, while its magnitude is controlled by the original lag correlation coefficient $R_{jj'}$. The approximation is exact on the diagonal blocks. It is also consistent with the independent-input case: if $X_{t,j}$ and $X_{t,j'}$ are independent for $j\neq j'$, then $\Gamma_{jj'}=0$ and $R_{jj'}=0$, so the off-diagonal residuals vanish.

If one keeps the residual term, then the approximation error is explicitly represented by $E$ in \eqref{eq:app_exact_hessian_with_residual}. In particular, if $\|E\|_2\leq \varepsilon$, then Weyl's inequality gives
\begin{equation}
\left|
\lambda_i(M)
-
\lambda_i\left(
J_p\otimes D+R\otimes(C-D)
\right)
\right|
\leq
\varepsilon
\end{equation}
for every eigenvalue index $i$. Therefore, the Kronecker-product Hessian in \eqref{eq:app_hessian_kron_final} should be understood as the leading-order Hessian, and the residual matrix $E$ measures the deviation of the exact nonlinear transformed-input covariance from the first-order covariance-transfer approximation.

\section{Bounds for $\lambda_{\max}(M)$}
\label{app:lambda_max_M}

In this appendix, we derive the upper and lower bounds of the largest eigenvalue of the Hessian matrix $M$ used in Theorem~\ref{thm:tsf_condition}. Recall from Proposition~\ref{prop:tsf_hessian} that, under the covariance-transfer approximation, the Hessian has the Kronecker-product form
\begin{equation}
M
\approx
J_p\otimes D
+
R\otimes(C-D),
\label{eq:app_lambda_max_M}
\end{equation}
where $J_p$ is the $p\times p$ all-one matrix, $R$ is the autocorrelation matrix of the lagged inputs, $C=\mathbb{E}[\mathbf{b}(z)\mathbf{b}(z)^{\mathrm{T}}]$, and $D=\boldsymbol{\nu}\boldsymbol{\nu}^{\mathrm{T}}$ with $\boldsymbol{\nu}=\mathbb{E}[\mathbf{b}(z)]$.

For notational simplicity, define
\begin{equation}
S=C-D.
\label{eq:app_S_definition_lambda_max}
\end{equation}
Since $S$ is the covariance matrix of the centered B-spline basis vector, it is positive semidefinite. Thus,
\begin{equation}
M
\approx
J_p\otimes D
+
R\otimes S.
\label{eq:app_M_JD_RS_lambda_max}
\end{equation}

\subsection{Upper Bound of $\lambda_{\max}(M)$}

Let
\begin{equation}
\mathbf{z}
=
(\mathbf{z}_1,\mathbf{z}_2,\ldots,\mathbf{z}_p),
\end{equation}
where each $\mathbf{z}_j\in\mathbb{R}^{m}$. By the definition of the Hessian entries, the quadratic form of $M$ can be written as
\begin{align}
\langle \mathbf{z},M\mathbf{z}\rangle
&=
\sum_{j=1}^{p}
\sum_{j'=1}^{p}
\mathbb{E}
\left[
\langle \mathbf{z}_j,\mathbf{b}(X_{t,j})\rangle
\langle \mathbf{z}_{j'},\mathbf{b}(X_{t,j'})\rangle
\right] \notag \\
&=
\mathbb{E}
\left[
\left(
\sum_{j=1}^{p}
\langle \mathbf{z}_j,\mathbf{b}(X_{t,j})\rangle
\right)^2
\right].
\label{eq:app_quadratic_expectation_lambda_max}
\end{align}
For any real numbers $a_1,\ldots,a_p$, the Cauchy--Schwarz inequality gives
\begin{equation}
\left(
\sum_{j=1}^{p}a_j
\right)^2
\leq
p\sum_{j=1}^{p}a_j^2.
\end{equation}
Taking
\begin{equation}
a_j=
\langle \mathbf{z}_j,\mathbf{b}(X_{t,j})\rangle,
\end{equation}
we obtain
\begin{align}
\langle \mathbf{z},M\mathbf{z}\rangle
&\leq
p
\sum_{j=1}^{p}
\mathbb{E}
\left[
\langle \mathbf{z}_j,\mathbf{b}(X_{t,j})\rangle^2
\right] \notag \\
&=
p
\sum_{j=1}^{p}
\langle \mathbf{z}_j,C\mathbf{z}_j\rangle .
\label{eq:app_upper_step1_lambda_max}
\end{align}
Since
\begin{equation}
\langle \mathbf{z}_j,C\mathbf{z}_j\rangle
\leq
\lambda_{\max}(C)\|\mathbf{z}_j\|_2^2,
\end{equation}
we have
\begin{align}
\langle \mathbf{z},M\mathbf{z}\rangle
&\leq
p\lambda_{\max}(C)
\sum_{j=1}^{p}
\|\mathbf{z}_j\|_2^2 \notag \\
&=
p\lambda_{\max}(C)
\|\mathbf{z}\|_2^2 .
\label{eq:app_upper_step2_lambda_max}
\end{align}
Therefore, by the Rayleigh--Ritz theorem,
\begin{equation}
\lambda_{\max}(M)
\leq
p\lambda_{\max}(C).
\label{eq:app_lambda_max_upper_final}
\end{equation}

\subsection{Lower Bound of $\lambda_{\max}(M)$}

We next derive a lower bound that does not rely on the entrywise nonnegativity of $R$. Since both $D$ and $S$ are positive semidefinite, and since both $J_p$ and $R$ are also positive semidefinite, the two Kronecker-product terms in \eqref{eq:app_M_JD_RS_lambda_max} are positive semidefinite. Hence, removing either term cannot increase the largest eigenvalue. Therefore,
\begin{equation}
\lambda_{\max}(M)
\geq
\lambda_{\max}(J_p\otimes D).
\label{eq:app_lower_JD}
\end{equation}
Because the largest eigenvalue of $J_p$ is $p$, we have
\begin{equation}
\lambda_{\max}(J_p\otimes D)
=
p\lambda_{\max}(D).
\end{equation}
Thus,
\begin{equation}
\lambda_{\max}(M)
\geq
p\lambda_{\max}(D).
\label{eq:app_lower_D}
\end{equation}
Similarly,
\begin{equation}
\lambda_{\max}(M)
\geq
\lambda_{\max}(R\otimes S).
\label{eq:app_lower_RS}
\end{equation}
Since the eigenvalues of a Kronecker product are pairwise products of the eigenvalues of its two factors, we obtain
\begin{equation}
\lambda_{\max}(R\otimes S)
=
\lambda_{\max}(R)\lambda_{\max}(S).
\end{equation}
Therefore,
\begin{equation}
\lambda_{\max}(M)
\geq
\lambda_{\max}(R)\lambda_{\max}(S).
\label{eq:app_lower_S}
\end{equation}

Now, using \eqref{eq:app_lambda_R_leq_p}, the bound in \eqref{eq:app_lower_D} implies
\begin{equation}
\lambda_{\max}(M)
\geq
\lambda_{\max}(R)\lambda_{\max}(D).
\label{eq:app_lower_D_rescaled}
\end{equation}
Combining \eqref{eq:app_lower_D_rescaled} and \eqref{eq:app_lower_S}, we get
\begin{equation}
2\lambda_{\max}(M)
\geq
\lambda_{\max}(R)
\left(
\lambda_{\max}(D)+\lambda_{\max}(S)
\right).
\label{eq:app_lower_sum_DS}
\end{equation}
Since
\begin{equation}
C=D+S,
\end{equation}
the largest eigenvalue of $C$ satisfies
\begin{equation}
\lambda_{\max}(C)
\leq
\lambda_{\max}(D)+\lambda_{\max}(S).
\label{eq:app_C_DS_subadditive}
\end{equation}
Substituting \eqref{eq:app_C_DS_subadditive} into \eqref{eq:app_lower_sum_DS} yields
\begin{equation}
2\lambda_{\max}(M)
\geq
\lambda_{\max}(R)\lambda_{\max}(C).
\end{equation}
Hence,
\begin{equation}
\lambda_{\max}(M)
\geq
\frac{1}{2}
\lambda_{\max}(R)\lambda_{\max}(C).
\label{eq:app_lambda_max_lower_final}
\end{equation}

Combining \eqref{eq:app_lambda_max_upper_final} and \eqref{eq:app_lambda_max_lower_final}, we obtain
\begin{equation}
\frac{1}{2}
\lambda_{\max}(R)\lambda_{\max}(C)
\leq
\lambda_{\max}(M)
\leq
p\lambda_{\max}(C),
\end{equation}
which completes the proof.

This result shows that, even without assuming entrywise positive autocorrelation, the largest curvature of the KAN training objective is still controlled by the largest eigenvalue of the input autocorrelation matrix $R$. Therefore, when the lagged-input correlation structure becomes more concentrated in a dominant direction, the largest eigenvalue of the Hessian also increases at least proportionally up to a universal constant factor.

\section{Bounds for $\lambda_p(M)$}
\label{app:lambda_p_M}

In this appendix, we derive the lower and upper bounds of the first non-degenerate eigenvalue $\lambda_p(M)$ used in Theorem~\ref{thm:tsf_condition}. Throughout this appendix, $M$ denotes the leading-order Hessian under the covariance-transfer approximation in Proposition~\ref{prop:tsf_hessian}, namely
\begin{equation}
M
\approx
J_p\otimes D
+
R\otimes(C-D),
\label{eq:app_lambdap_M}
\end{equation}
where $J_p=\mathbf{1}_p\mathbf{1}_p^\top$, $R$ is the autocorrelation matrix of the lagged inputs, $C$ is the B-spline Gram matrix, and $D=\boldsymbol{\nu}\boldsymbol{\nu}^\top$ with
\begin{equation}
\boldsymbol{\nu}
=
\mathbb{E}[\mathbf{b}(z)].
\end{equation}
For notational simplicity, define
\begin{equation}
S=C-D .
\end{equation}
Since $S$ is the covariance matrix of the centered B-spline basis vector, it is positive semidefinite.

We first clarify the degenerate directions. B-spline basis functions satisfy the partition-of-unity property. Therefore, there exists a unit vector $\mathbf{v}_0\in\mathbb{R}^m$, corresponding to the constant spline direction, such that
\begin{equation}
\mathbf{v}_0^\top \mathbf{b}(z)
=
\text{constant}
\end{equation}
for all $z$ in the spline domain. Consequently,
\begin{equation}
S\mathbf{v}_0=\mathbf{0}.
\end{equation}
Let
\begin{equation}
\mathcal{V}_{\perp}
=
\left\{
\mathbf{v}\in\mathbb{R}^m:
\mathbf{v}^\top\mathbf{v}_0=0
\right\}
\end{equation}
be the Euclidean orthogonal complement of the constant spline direction. We assume that $S$ is positive definite on $\mathcal{V}_{\perp}$, which is the standard non-degeneracy condition for the spline Gram matrix.

The degenerate null space of $M$ is
\begin{equation}
\mathcal{Z}
=
\left\{
\mathbf{a}\otimes\mathbf{v}_0:
\mathbf{a}^\top\mathbf{1}_p=0
\right\}.
\label{eq:app_null_space}
\end{equation}
Indeed, for any $\mathbf{a}^\top\mathbf{1}_p=0$, we have
\begin{equation}
(J_p\otimes D)(\mathbf{a}\otimes\mathbf{v}_0)
=
(J_p\mathbf{a})\otimes(D\mathbf{v}_0)
=
\mathbf{0},
\end{equation}
and
\begin{equation}
(R\otimes S)(\mathbf{a}\otimes\mathbf{v}_0)
=
(R\mathbf{a})\otimes(S\mathbf{v}_0)
=
\mathbf{0}.
\end{equation}
Thus, $\dim(\mathcal{Z})=p-1$. The first non-degenerate eigenvalue $\lambda_p(M)$ is the smallest eigenvalue of $M$ restricted to
\begin{equation}
\mathcal{W}
=
\mathcal{Z}^{\perp}.
\end{equation}
Every vector $\mathbf{w}\in\mathcal{W}$ can be written as
\begin{equation}
\mathbf{w}
=
\alpha\,\mathbf{1}_p\otimes\mathbf{v}_0
+
\mathbf{z},
\label{eq:app_lambdap_w_decomp}
\end{equation}
where $\alpha\in\mathbb{R}$ and
\begin{equation}
\mathbf{z}
=
[\mathbf{z}_1^\top,\ldots,\mathbf{z}_p^\top]^\top,
\qquad
\mathbf{z}_j\in\mathcal{V}_{\perp}.
\end{equation}
Since $\mathbf{v}_0$ is orthogonal to $\mathcal{V}_{\perp}$, we have
\begin{equation}
\|\mathbf{w}\|_2^2
=
\alpha^2p
+
\sum_{j=1}^{p}\|\mathbf{z}_j\|_2^2.
\label{eq:app_lambdap_norm_decomp}
\end{equation}

\subsection{Lower Bound of $\lambda_p(M)$}

Let
\begin{equation}
r_{\min}
=
\lambda_{\min}(R),
\qquad
\lambda_C
=
\lambda_{\min}(C).
\end{equation}
Since $R$ is positive definite, $r_{\min}>0$. Moreover, because $R$ is a correlation matrix, $\operatorname{Tr}(R)=p$, and hence $r_{\min}\leq 1$.

For any $\mathbf{w}\in\mathcal{W}$ with the decomposition in \eqref{eq:app_lambdap_w_decomp}, the quadratic form of $M$ satisfies
\begin{align}
\mathbf{w}^{\top}M\mathbf{w}
&=
\mathbf{w}^{\top}(J_p\otimes D)\mathbf{w}
+
\mathbf{w}^{\top}(R\otimes S)\mathbf{w} \notag \\
&\geq
\mathbf{w}^{\top}(J_p\otimes D)\mathbf{w}
+
r_{\min}\,
\mathbf{w}^{\top}(I_p\otimes S)\mathbf{w},
\label{eq:app_lambdap_lower_step1}
\end{align}
where we used $R\succeq r_{\min}I_p$ and $S\succeq 0$.

Because $S\mathbf{v}_0=\mathbf{0}$, the second term depends only on the non-constant components:
\begin{equation}
\mathbf{w}^{\top}(I_p\otimes S)\mathbf{w}
=
\sum_{j=1}^{p}\mathbf{z}_j^\top S\mathbf{z}_j .
\label{eq:app_lambdap_S_term}
\end{equation}
For the first term, using $D=\boldsymbol{\nu}\boldsymbol{\nu}^\top$, we have
\begin{align}
\mathbf{w}^{\top}(J_p\otimes D)\mathbf{w}
&=
\left(
\sum_{j=1}^{p}
(\alpha\mathbf{v}_0+\mathbf{z}_j)
\right)^\top
D
\left(
\sum_{j=1}^{p}
(\alpha\mathbf{v}_0+\mathbf{z}_j)
\right) \notag \\
&=
\left[
\boldsymbol{\nu}^\top
\left(
p\alpha\mathbf{v}_0+\sum_{j=1}^{p}\mathbf{z}_j
\right)
\right]^2 .
\label{eq:app_lambdap_D_term}
\end{align}

Define
\begin{equation}
\bar{\mathbf{z}}
=
\frac{1}{p}\sum_{j=1}^{p}\mathbf{z}_j,
\qquad
\boldsymbol{\delta}_j
=
\mathbf{z}_j-\bar{\mathbf{z}}.
\end{equation}
Then
\begin{equation}
\sum_{j=1}^{p}\boldsymbol{\delta}_j=\mathbf{0},
\qquad
\bar{\mathbf{z}}\in\mathcal{V}_{\perp},
\qquad
\boldsymbol{\delta}_j\in\mathcal{V}_{\perp}.
\end{equation}
Furthermore,
\begin{equation}
\sum_{j=1}^{p}\|\mathbf{z}_j\|_2^2
=
p\|\bar{\mathbf{z}}\|_2^2
+
\sum_{j=1}^{p}\|\boldsymbol{\delta}_j\|_2^2,
\label{eq:app_lambdap_z_norm_mean_dev}
\end{equation}
and
\begin{equation}
\sum_{j=1}^{p}\mathbf{z}_j^\top S\mathbf{z}_j
=
p\bar{\mathbf{z}}^\top S\bar{\mathbf{z}}
+
\sum_{j=1}^{p}\boldsymbol{\delta}_j^\top S\boldsymbol{\delta}_j .
\label{eq:app_lambdap_z_S_mean_dev}
\end{equation}

Substituting \eqref{eq:app_lambdap_D_term} and \eqref{eq:app_lambdap_z_S_mean_dev} into \eqref{eq:app_lambdap_lower_step1}, we obtain
\begin{align}
\mathbf{w}^{\top}M\mathbf{w}
&\geq
p^2
\left[
\boldsymbol{\nu}^\top
(\alpha\mathbf{v}_0+\bar{\mathbf{z}})
\right]^2
+
r_{\min}p\bar{\mathbf{z}}^\top S\bar{\mathbf{z}}
+
r_{\min}
\sum_{j=1}^{p}
\boldsymbol{\delta}_j^\top S\boldsymbol{\delta}_j .
\label{eq:app_lambdap_lower_step2}
\end{align}

Since $r_{\min}\leq 1\leq p$, we have
\begin{align}
&p^2
\left[
\boldsymbol{\nu}^\top
(\alpha\mathbf{v}_0+\bar{\mathbf{z}})
\right]^2
+
r_{\min}p\bar{\mathbf{z}}^\top S\bar{\mathbf{z}}
\notag \\
&\quad
\geq
r_{\min}p
\left\{
\bar{\mathbf{z}}^\top S\bar{\mathbf{z}}
+
\left[
\boldsymbol{\nu}^\top
(\alpha\mathbf{v}_0+\bar{\mathbf{z}})
\right]^2
\right\}.
\label{eq:app_lambdap_mean_lower}
\end{align}
Because $S\mathbf{v}_0=\mathbf{0}$ and $D=\boldsymbol{\nu}\boldsymbol{\nu}^\top$, the expression in braces is exactly the quadratic form of $C=S+D$ along the vector $\alpha\mathbf{v}_0+\bar{\mathbf{z}}$:
\begin{align}
\bar{\mathbf{z}}^\top S\bar{\mathbf{z}}
+
\left[
\boldsymbol{\nu}^\top
(\alpha\mathbf{v}_0+\bar{\mathbf{z}})
\right]^2
&=
(\alpha\mathbf{v}_0+\bar{\mathbf{z}})^\top
C
(\alpha\mathbf{v}_0+\bar{\mathbf{z}}) \notag \\
&\geq
\lambda_C
\left(
\alpha^2+\|\bar{\mathbf{z}}\|_2^2
\right),
\label{eq:app_lambdap_C_lower_mean}
\end{align}
where we used $\mathbf{v}_0^\top\bar{\mathbf{z}}=0$.

It remains to control the deviation components. Since $S$ is positive definite on $\mathcal{V}_{\perp}$, define
\begin{equation}
s_{\perp}
=
\min_{\mathbf{v}\in\mathcal{V}_{\perp},\,\|\mathbf{v}\|_2=1}
\mathbf{v}^\top S\mathbf{v}.
\end{equation}
The matrix $C=S+D$ is a rank-one positive semidefinite perturbation of $S$. Since $S$ has one zero eigenvalue associated with the constant direction $\mathbf{v}_0$, the Cauchy interlacing theorem implies
\begin{equation}
\lambda_C
=
\lambda_{\min}(C)
\leq
s_{\perp}.
\label{eq:app_lambdap_interlacing}
\end{equation}
Therefore, for every $\boldsymbol{\delta}_j\in\mathcal{V}_{\perp}$,
\begin{equation}
\boldsymbol{\delta}_j^\top S\boldsymbol{\delta}_j
\geq
\lambda_C
\|\boldsymbol{\delta}_j\|_2^2.
\label{eq:app_lambdap_dev_lower}
\end{equation}

Combining \eqref{eq:app_lambdap_lower_step2}, \eqref{eq:app_lambdap_mean_lower}, \eqref{eq:app_lambdap_C_lower_mean}, and \eqref{eq:app_lambdap_dev_lower}, we obtain
\begin{align}
\mathbf{w}^{\top}M\mathbf{w}
&\geq
r_{\min}p\lambda_C
\left(
\alpha^2+\|\bar{\mathbf{z}}\|_2^2
\right)
+
r_{\min}\lambda_C
\sum_{j=1}^{p}
\|\boldsymbol{\delta}_j\|_2^2 \notag \\
&=
r_{\min}\lambda_C
\left(
\alpha^2p
+
p\|\bar{\mathbf{z}}\|_2^2
+
\sum_{j=1}^{p}
\|\boldsymbol{\delta}_j\|_2^2
\right) \notag \\
&=
r_{\min}\lambda_C
\|\mathbf{w}\|_2^2,
\label{eq:app_lambdap_rayleigh_lower}
\end{align}
where the last equality follows from \eqref{eq:app_lambdap_norm_decomp} and \eqref{eq:app_lambdap_z_norm_mean_dev}.

Thus, for every nonzero $\mathbf{w}\in\mathcal{W}$,
\begin{equation}
\frac{\mathbf{w}^{\top}M\mathbf{w}}
{\mathbf{w}^{\top}\mathbf{w}}
\geq
\lambda_{\min}(R)\lambda_{\min}(C).
\end{equation}
Since $\lambda_p(M)$ is the smallest eigenvalue of $M$ on the non-degenerate subspace $\mathcal{W}$, the Rayleigh--Ritz theorem gives
\begin{equation}
\lambda_p(M)
\geq
\lambda_{\min}(R)\lambda_{\min}(C).
\label{eq:app_lambdap_lower_final}
\end{equation}

\subsection{Upper Bound of $\lambda_p(M)$}

We next derive an upper bound using a test direction in the non-degenerate subspace. Let $\mathbf{u}_{\min}\in\mathbb{R}^{p}$ be a unit eigenvector of $R$ corresponding to $\lambda_{\min}(R)$:
\begin{equation}
R\mathbf{u}_{\min}
=
\lambda_{\min}(R)\mathbf{u}_{\min},
\qquad
\|\mathbf{u}_{\min}\|_2=1.
\end{equation}

We choose a unit vector $\mathbf{v}\in\mathcal{V}_{\perp}$ satisfying
\begin{equation}
\boldsymbol{\nu}^\top\mathbf{v}=0.
\label{eq:app_lambdap_v_condition}
\end{equation}
Such a vector exists whenever the spline basis dimension is at least three, i.e., $m\geq 3$, or more generally whenever
\begin{equation}
\mathcal{V}_{\perp}\cap \boldsymbol{\nu}^{\perp}
\end{equation}
is nontrivial. This is the usual setting for practical KANs, where the number of spline basis functions is larger than two.

Consider the test vector
\begin{equation}
\mathbf{w}
=
\mathbf{u}_{\min}\otimes\mathbf{v}.
\end{equation}
Since $\mathbf{v}\in\mathcal{V}_{\perp}$, the vector $\mathbf{w}$ is orthogonal to the degenerate null space $\mathcal{Z}$ and hence belongs to $\mathcal{W}=\mathcal{Z}^{\perp}$. Moreover,
\begin{equation}
\|\mathbf{w}\|_2
=
\|\mathbf{u}_{\min}\|_2
\|\mathbf{v}\|_2
=
1.
\end{equation}

The Rayleigh quotient of $M$ along this direction is
\begin{align}
\mathbf{w}^{\top}M\mathbf{w}
&=
(\mathbf{u}_{\min}^{\top}J_p\mathbf{u}_{\min})
(\mathbf{v}^{\top}D\mathbf{v})
+
(\mathbf{u}_{\min}^{\top}R\mathbf{u}_{\min})
(\mathbf{v}^{\top}S\mathbf{v}) \notag \\
&=
(\mathbf{u}_{\min}^{\top}J_p\mathbf{u}_{\min})
(\boldsymbol{\nu}^{\top}\mathbf{v})^2
+
\lambda_{\min}(R)
\mathbf{v}^{\top}S\mathbf{v}.
\label{eq:app_lambdap_upper_rayleigh}
\end{align}
By the choice of $\mathbf{v}$ in \eqref{eq:app_lambdap_v_condition}, the first term vanishes. Hence
\begin{equation}
\mathbf{w}^{\top}M\mathbf{w}
=
\lambda_{\min}(R)
\mathbf{v}^{\top}S\mathbf{v}.
\end{equation}
Furthermore, since $\boldsymbol{\nu}^{\top}\mathbf{v}=0$, we have
\begin{equation}
\mathbf{v}^{\top}D\mathbf{v}
=
(\boldsymbol{\nu}^{\top}\mathbf{v})^2
=
0,
\end{equation}
and therefore
\begin{equation}
\mathbf{v}^{\top}S\mathbf{v}
=
\mathbf{v}^{\top}(C-D)\mathbf{v}
=
\mathbf{v}^{\top}C\mathbf{v}.
\end{equation}
It follows that
\begin{equation}
\mathbf{w}^{\top}M\mathbf{w}
=
\lambda_{\min}(R)
\mathbf{v}^{\top}C\mathbf{v}
\leq
\lambda_{\min}(R)\lambda_{\max}(C).
\label{eq:app_lambdap_test_upper}
\end{equation}

Since $\lambda_p(M)$ is the minimum Rayleigh quotient of $M$ over the non-degenerate subspace $\mathcal{W}$, we have
\begin{align}
\lambda_p(M)
&=
\min_{\mathbf{w}\in\mathcal{W},\,\|\mathbf{w}\|_2=1}
\mathbf{w}^{\top}M\mathbf{w} \notag \\
&\leq
(\mathbf{u}_{\min}\otimes\mathbf{v})^\top
M
(\mathbf{u}_{\min}\otimes\mathbf{v}) \notag \\
&\leq
\lambda_{\min}(R)\lambda_{\max}(C).
\label{eq:app_lambdap_upper_final}
\end{align}
Therefore,
\begin{equation}
\lambda_p(M)
\leq
\lambda_{\min}(R)\lambda_{\max}(C).
\end{equation}

Combining the lower bound in \eqref{eq:app_lambdap_lower_final} and the upper bound in \eqref{eq:app_lambdap_upper_final}, we obtain
\begin{equation}
\lambda_{\min}(R)\lambda_{\min}(C)
\leq
\lambda_p(M)
\leq
\lambda_{\min}(R)\lambda_{\max}(C).
\label{eq:app_lambdap_final}
\end{equation}
This completes the derivation of the bounds for the first non-degenerate eigenvalue of the Hessian matrix.

\bibliographystyle{IEEEtran}

\bibliography{reference.bib}

\end{document}